\definecolor{darkblue}{rgb}{0.0, 0.0, 0.55}
\definecolor{dark2blue}{rgb}{0.0, 0.0, 0.4}
\definecolor{darkred}{rgb}{0.55, 0.0, 0.0}
\definecolor{darkgreen}{rgb}{0, 0.5, 0.0}
\definecolor{darkblue}{rgb}{0.0, 0.0, 0.55}
\definecolor{darkblue}{rgb}{0.0, 0.0, 0.55}
\DeclareMathOperator*{\argmin}{arg\,min}
\newcommand{\bI}{{\bm{\mathrm{I}}}}
\newcommand{\bJ}{{\bm{\mathrm{J}}}}
\newcommand{\bu}{{\bm{\mathrm{u}}}}
\newcommand{\ba}{{\bm{\mathrm{a}}}}
\newcommand{\rmR}{{{\mathrm{R}}}}
\newcommand{\bx}{{\boldsymbol x}}
\newcommand{\bv}{{\bm{\mathrm{v}}}}
\newcommand{\by}{{\boldsymbol y}}
\newcommand{\bw}{{\bm{\mathrm{w}}}} 
\newcommand{\bbf}{{\bm{\mathrm{f}}}}
\newcommand{\bbg}{{\bm{\mathrm{g}}}}
\newcommand{\bA}{{\mathrm{\mathbf{A}}}}
\newcommand{\bTheta}{{\mathrm{\mathbf{\Theta}}}}
\newcommand{\bB}{{\mathrm{\mathbf{B}}}}
\newcommand{\bC}{{\mathrm{\mathbf{C}}}}
\newcommand{\bD}{{\mathrm{\mathbf{D}}}}
\newcommand{\bH}{{\mathrm{\mathbf{H}}}}
\newcommand{\bL}{{\mathrm{\mathbf{L}}}}
\newcommand{\MX}{{\mathcal{X}}}
\newcommand{\bepsilon}{{\boldsymbol \epsilon}}
\newcommand{\bmu}{{\boldsymbol \mu}}
\newcommand{\bSigma}{{\boldsymbol \Sigma}}
\newcommand{\dd}{\mathcal{\dagger}}
\def\eqref#1{(\ref{#1})}
\newcommand{\ea}{\end{array}}
\newcommand{\ee}{\end{equation}}
\newcommand{\bea}{\begin{eqnarray}}
\newcommand{\eea}{\end{eqnarray}}
\newcommand{\beaa}{\begin{eqnarray*}}
\newcommand{\eeaa}{\end{eqnarray*}}
\def\E{\mathbb{E}}
\def\bD{{\bf D}}
\def\bx{{\bf x}}
\def\by{{\bf y}}
\def\bz{{\bf z}}
\newcommand{\basa}{\begin{assumption}}
\newcommand{\easa}{\end{assumption}}
\newcommand{\bas}{\begin{assum}}
\newcommand{\eas}{\end{assum}}
\def\dd{\mathrm{d}}
\def\limP2{\,\mathop{\buildrel \Pi_2\over\longrightarrow\,}}
\def\1{{\bf 1}}
\def\by{{\bf y}}
\def\:{\!:\!}
\newtheorem{assump}{Assumption}
\newtheorem{remark}{Remark}
\newtheorem{theorem}{Theorem}
\newtheorem{lemma}{Lemma}
\newtheorem{proposition}{Proposition}
\newtheorem{assumption}{Assumption}
\newcommand{\R}{\mathbb{R}}
\begin{document}

%

%

\runningauthor{K. Rojas, Y. Tan, M. Tao, Y. Nevmyvaka, and W. Deng}

\twocolumn[

\aistatstitle{Variational Schr\"odinger Momentum Diffusion}

\aistatsauthor{ 
    Kevin Rojas$^{* 1}$, $\ $ Yixin Tan$^{* 2}$, $\ $Molei Tao$^{1}$, $\ $Yuriy Nevmyvaka$^{3}$, $\ $Wei Deng$^{3}$ 
}

\aistatsaddress{$^{1}$Georgia Institute of Technology \And $^{2}$Duke University \And $^{3}$Morgan Stanley}
]
\footnotetext[1]{Equal contribution. K. Rojas conducted this work during his internship at Morgan Stanley.}
\footnotetext[2]{Correspondence: Wei Deng - weideng056@gmail.com.}

\begin{abstract}
  The Momentum Schrödinger Bridge (mSB) \citep{mSB} has emerged as a leading method for accelerating generative diffusion processes and reducing transport costs. 
  However, the lack of simulation-free properties inevitably results in high training costs and affects scalability. To obtain a trade-off between transport properties and scalability, we introduce variational Schr\"odinger momentum diffusion (VSMD), which employs linearized forward score functions (variational scores) to eliminate the dependence on simulated forward trajectories. Our approach leverages a multivariate diffusion process with adaptively transport-optimized variational scores. Additionally, we apply a critical-damping transform to stabilize training by removing the need for score estimations for both velocity and samples. Theoretically, we prove the convergence of samples generated with optimal variational scores and momentum diffusion. Empirical results demonstrate that VSMD efficiently generates anisotropic shapes while maintaining transport efficacy, outperforming overdamped alternatives, and avoiding complex denoising processes.  Our approach also scales effectively to real-world data, achieving competitive results in time series and image generation.
\end{abstract}


\section{Introduction}

Score-based generative models (SGMs) have become the preferred method for generative modeling, showcasing exceptional capabilities in generating images, videos, and audios \citep{SGMS_beat_GAN, imagen_video, DiffWave, text_2_image}. To improve efficiency and simplify the denoising process, critically-damped Langevin diffusion (CLD) \citep{CLD} leverages kinetic (second-order) Langevin dynamics \citep{dalalyan_riou-durand_2020} 
by incorporating auxiliary velocity variables, resulting in well-behaved score functions at the boundary. While both SGMs and CLDs offer scalability benefits and simulation-free properties, they lack guaranteed optimal transport (OT) properties \citep{Lavenant_Santambrogio_22} and often involve costly evaluations to produce high-quality content \citep{DDPM, Progressive_distillation, DPMsolver}.

In contrast, the Momentum Schrödinger Bridge (mSB) \citep{Chen16, Pavon_CPAM_21, Caluya21, DSB, mSB} focuses on optimizing a stochastic control objective to achieve entropic optimal transport. The extension of forward-backward stochastic differential equations (FB-SDEs) \citep{forward_backward_SDE} with velocity variables not only accelerates the processes but also simplifies the denoising process and lowers tuning costs. However, training the intractable forward score functions for optimal transport relies heavily on simulated trajectories and often requires an additional pipeline using SGMs or CLDs for warm-up training to scale up to real-world data \citep{DSB, forward_backward_SDE}. This prompts a critical question: How can we efficiently train momentum diffusion models from scratch while maintaining effective transport?

To address these challenges, we propose the Variational Schrödinger Momentum Diffusion (VSMD) model. Inspired by \cite{VSDM}, we adopt locally linearized variational scores using variational inference to restore simulation-free properties for training backward scores. Additionally, we introduce a critical-damping transform to simplify and stabilize training by reducing the need to estimate two variational scores associated with both velocity and samples. Unlike the single-variate CLD model, VSMD functions as an adaptively transport-optimized multivariate diffusion \citep{multivariateDM}, facilitating efficient training, a simplified denoising process, and effective transport \citep{forward_backward_SDE}. Our contributions are highlighted in three key aspects and presented in Figure \ref{fig:vsmd_vsdm}: 

\begin{itemize}
    \item We introduce the Variational Schr\"odinger Momentum Diffusion (VSMD), an adaptive multivariate diffusion with simulation-free properties. We derive a tailored critical-damping rule to streamline training by avoiding the complexity of estimating additional variational scores. 
    \item Theoretically, we identify the convergence of the adaptively transport-optimized multivariate diffusion using techniques from stochastic approximation \citep{RobbinsM1951} and stochastic differential equations.
    \item VSMD surpasses its overdamped counterparts by leveraging momentum accelerations and avoiding complex denoising processes. It demonstrates strong performance in conditional and unconditional generations in both images and time series data, all while eliminating the need for warm-up initializations.
\end{itemize}

\begin{figure}
    \centering
    \includegraphics[width=1.05\linewidth]{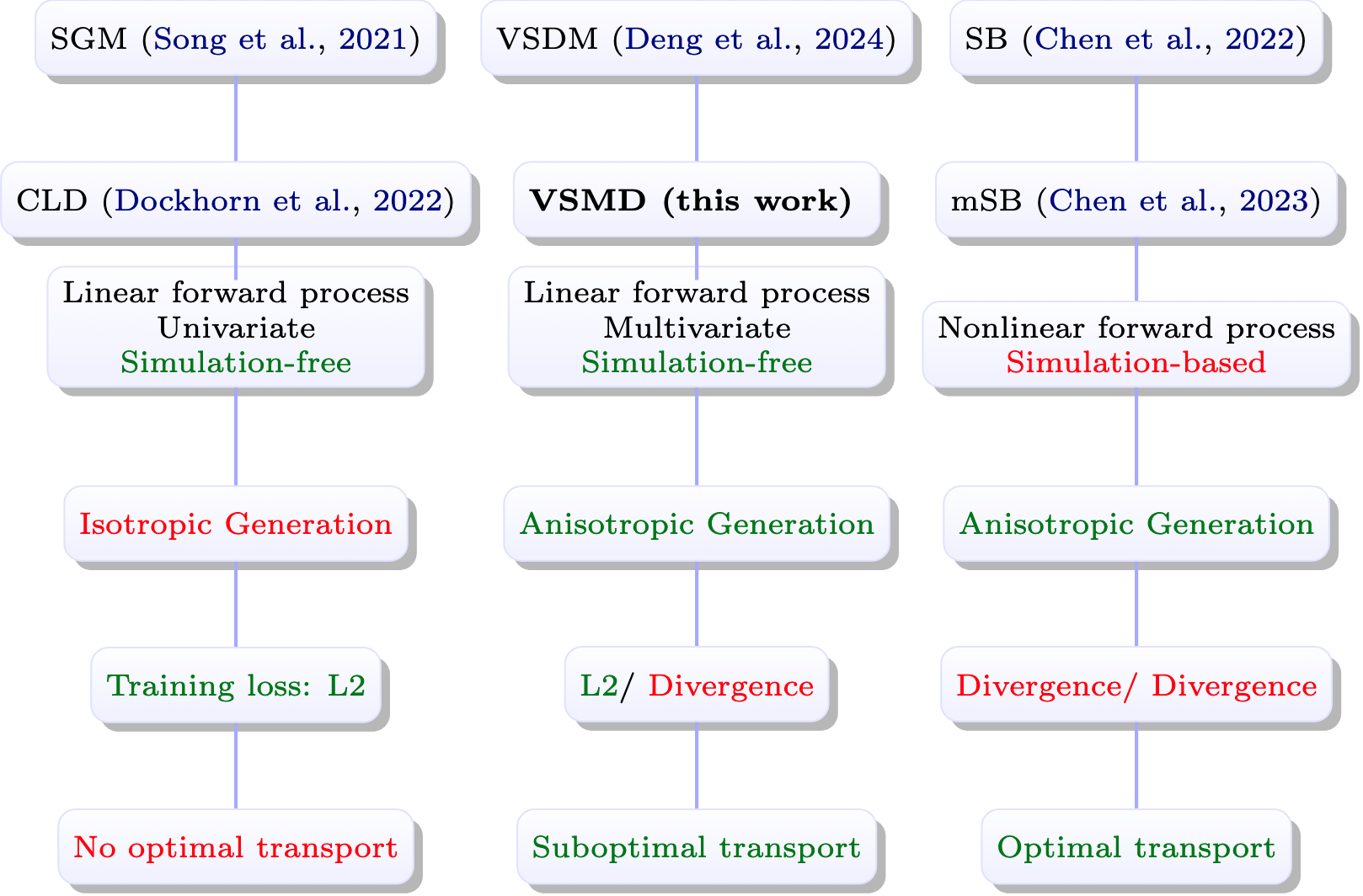}
    \vskip -0.05in
    \caption{Comparison with existing methodologies and algorithm properties.}
    \label{fig:vsmd_vsdm}
\end{figure}

\section{Related Works}

\paragraph{Schrödinger Bridge (SB) Problems} Dynamic SB solvers for high-dimensional problems were initially introduced by \cite{DSB, forward_backward_SDE, SBP_max_llk, gefei_21, provably_schrodinger_bridge, rSB} to promote smoother trajectories with optimal transport properties. Subsequent work by \cite{SB_matching, Peluchetti23, SB_matching_momentum} enhanced performance by preserving marginal distributions and simplifying objectives inspired by bridge matching \citep{Rectified} and flow matching \citep{flow_matching}, which were further extended to general cost functions \citep{Wasserstein_Lagrangian_Flows, deep_generalized_SB}. To improve scalability, \cite{SB_flow_unpaired} proposed an online scheme to avoid caching samples and maintaining two networks. However, the need for simulations still limits scalability, highlighting the ongoing demand for more scalable methods.

\paragraph{Simulation-free Generative Models} \cite{flow_matching} proposed a simulation-free paradigm to train continuous normalizing flows \citep{neural_ode}, inherently connected to the OT displacement map \citep{McCann_97}. \cite{CFM_Tong, Multisample_flow_matching, Neural_Monge_Maps} advanced this field by leveraging minibatch OT objectives, approximations from discrete Sinkhorn solvers, non-independent couplings from minibatch data, and unbalanced Monge map estimators. \cite{Rectified, Rectified_group} introduced methods to rectify non-smooth trajectories and provide theoretical guarantees with convex cost functions. \cite{Albergo_stochastic_interpolants, Albergo_unified_framework} elegantly unified flow and diffusion models in a simulation-free manner. \cite{Aligned_SB} addressed data alignment issues, while \cite{neural_sb_adversarial, EOT_diffusion_process} employed adversarial objectives to optimize OT losses, although they still do not yield OT maps. \cite{light_SB} achieved simulation-free properties on small-scale problems by parameterizing the Schr\"odinger bridge potentials with Gaussian mixture distributions. \cite{NFDM} supported a broader family of forward diffusion and also introduced extra complexities. However, achieving simulation-free properties often requires sacrifices in OT properties, underscoring the need for more efficient schemes.


\section{Preliminaries}

\paragraph{SGMs:} Score-based generative models (SGMs) \citep{score_sde} have achieved unprecedented success in generative models. SGMs propose reversing a diffusion process to generate data distributions \citep{Anderson82}. However, the simplicity of the forward diffusion process, such as Brownian motion or the Ornstein-Uhlenbeck process, results in a complex denoising process that requires extensive tuning to generalize across different datasets.

\paragraph{CLD:} To address these challenges, critically-damped Langevin diffusion (CLD) has been proposed to accelerate diffusion by augmenting data $\bx_t\in \mathbb{R}^d$ with velocity variables $\bv_t\in\mathbb{R}^d$ motivated by Hamiltonian dynamics \citep{Neal12}:
\begin{align}\label{CLD_dyn}
   \begin{pmatrix}
    \dd \bx_t \\
    \dd \bv_t
   \end{pmatrix} &= 
   \frac{\beta}{2}
   \begin{pmatrix}
    \bv_t \\
    -\bx_t - \gamma \bv_t
   \end{pmatrix} \dd t + 
   \begin{pmatrix}
     \qquad\bm{0}_d \\
     \sqrt{\beta \gamma}\bI_d   
   \end{pmatrix}\dd \bw_t,            
\end{align}
where $\gamma$ is the friction coefficient that controls the randomness, $\mathbf{w}_t$ is the standard Brownian motion in $\mathbb{R}^{2d}$. In the long-time limit, the invariant distribution of continuous-time process \eqref{CLD_dyn} is a joint Gaussian distribution $\mathrm{N}(\bx; 0, \bI_d) \mathrm{N}(\bv; 0, \bI_d)$.

\paragraph{Damping Regimes:} The choices of $\gamma$ correspond to different damping regimes of Langevin dynamics \citep{classical_mechanics, CLD}. For high friction with $\gamma>2$, it leads to overdamped Langevin dynamics (LD) with straighter trajectories, however, the convergence speed is also impeded. In contrast, lower friction yields more oscillating trajectories and accelerates convergence. The dynamics with $\gamma=2$ are termed the critical-damped Langevin diffusion (CLD) while $\gamma<2$ corresponds to underdamped Langevin diffusion (ULD). Theoretically, CLD provides a balance between oscillation and speed, though in practice, different damping may need to be selected for optimal trade-off.

\paragraph{The Effect of Friction on Convergence:} The impact of the friction $\gamma$ on convergence speed is well understood. In particular, LD requires $\Omega(d/\epsilon^2)$ iterations to achieve an $\epsilon$ error in 2-Wasserstein (W2) distance for strongly log-concave distributions, whereas ULD requires only $\Omega(\sqrt{d}/\epsilon)$ iterations to achieve the same accuracy \citep{ULD_convergence}. Additional literature supporting the speed advantage of employing Hamiltonian dynamics can be found in \cite{Mangoubi18_leapfrog, dalalyan_riou-durand_2020, mixing_hmc}.

\section{Variational Schr\"odinger Momentum Diffusion}

\subsection{Momentum Schr\"odinger Bridge}
The momentum Schr\"odinger bridge (mSB) \citep{Caluya21, mSB} can be interpreted as a stochastic optimal control (SOC) objective with optimal transport guarantees \citep{Chen21}:
\begin{align}
    &\inf_{\bu\in \mathcal{U}} \E\bigg\{\int_0^T \frac{1}{2}\|\bu(\overrightarrow\ba,t)\|^2_2 \mathrm{d}t \bigg\} \notag \\
    \text{s.t.} &\ \ \mathrm{d} \overrightarrow\ba_t=\left[\bbf(\overrightarrow\ba_t)+\bbg\bu(\overrightarrow\ba_t,t)\right]\mathrm{d}t+ \bbg\mathrm{d} \overrightarrow{\mathbf{w}_t} \label{control_diffusion} \\
    &\bbf(\overrightarrow\ba):= -\frac{\beta}{2}\bigg(\begin{pmatrix}
        0 & -1 \\
        1 & \gamma 
        \end{pmatrix} \otimes \bI_d\bigg) \overrightarrow\ba,\  \bbg:= \sqrt{\beta \gamma}\bJ_{2d}, \notag \\
    &\overrightarrow\ba_0\sim  \rho_{0}:=p_{\text{data}}\otimes p_{\bv_0} ,\ \overrightarrow\ba_T\sim  \rho_{T}:=p_{\text{prior}}\otimes p_{\bv_T} 
    , \notag
\end{align}
where $\bJ_{2d}=\begin{pmatrix}
0 & 0 \\
0 & 1
\end{pmatrix} \otimes \bI_d$, $\overrightarrow\ba=\begin{pmatrix}
    \overrightarrow\bx \\
    \overrightarrow\bv
   \end{pmatrix}\in\mathbb{R}^{2d}$ is the augmented variable;  $\bu:\mathbb{R}^{2d}\times [0, T]\rightarrow \mathbb{R}^{2d}$ is a control; $\bbf:=\mathbb{R}^{2d}\times [0, T]\rightarrow \mathbb{R}^{2d}$ is a vector field. The probability density function (PDF) for the process \eqref{control_diffusion} is denoted by $\overrightarrow\rho(\cdot, t)$. We fix $p_{\bv_0}$ and $p_{\bv_T}$ as the standard Gaussian distribution $\mathrm{N}(0, \bI)$.

The Lagrangian of Eq.\eqref{control_diffusion} leads to the Hamilton–Jacobi–Bellman (HJB) equation \citep{Caluya21, mSB}; applying the Hopf-Cole transform, we can solve the \emph{Schr\"{o}dinger system}  via the backward-forward Kolmogorov equations 
 \begin{align*}
    \footnotesize
    &\qquad \begin{cases}
    \frac{\partial \overrightarrow\psi}{\partial t}+\langle \nabla \overrightarrow\psi, \bbf \rangle +\frac{1}{2} \bbg \bbg^\intercal\Delta\overrightarrow\psi=0 \\[3pt]
    \frac{\partial \overleftarrow\varphi}{\partial t}+\nabla\cdot (\overleftarrow\varphi \bbf)-\frac{1}{2} \bbg \bbg^\intercal \Delta \overleftarrow\varphi=0,
    \end{cases}\\
    &\text{\ \ s.t. } \overrightarrow\psi(\bx, 0) \overleftarrow\varphi(\bx, 0) = \rho_0,\ \ ~\overrightarrow\psi(\by, T) \overleftarrow\varphi(\by, T) = \rho_T.
\end{align*}
    
Considering the stochastic representation for the forward Kolmogorov equation and the time reversal \citep{Anderson82}, we have the forward-backward stochastic differential equation (FB-SDE) \cite{mSB}:
\begin{subequations}
\begin{align}
\mathrm{d} \overrightarrow\ba_t&=\left[\bbf(\overrightarrow\ba_t, t) +  \bbg \bbg^\intercal \begin{pmatrix}
         \bm{0} \\
         \nabla_{\bv}\log\overrightarrow\psi(\overrightarrow\ba_t, t)
        \end{pmatrix} \right]\mathrm{d}t \notag \\
        &\qquad +\bbg \mathrm{d} \overrightarrow{\mathbf{w}_t}, \ \  \overrightarrow\ba_0\sim \rho_0, \label{f-sde}\\
\mathrm{d} \overleftarrow\ba_t&=\left[\bbf(\overleftarrow\ba_t, t) - \bbg \bbg^\intercal \begin{pmatrix}
         \bm{0} \\
         \nabla_{\bv}\log\overleftarrow\varphi(\overleftarrow\ba_t, t)
        \end{pmatrix}  \right]\mathrm{d}t \notag \\
        &\qquad+ \bbg \mathrm{d} \overleftarrow{\mathbf{w}}_t,\ \  \overleftarrow\ba_T \sim \rho_T. \label{b-sde}
\end{align}\label{FB-SDE}
\end{subequations}


We can next solve $(\overrightarrow\psi, \overleftarrow\varphi)$ for the augmented variable $\ba=\begin{pmatrix}
    \bx \\
    \bv
   \end{pmatrix}$ to the \emph{Schr\"{o}dinger system} by the nonlinear Feynman-Kac formula \citep{Ma_FB_SDE, forward_backward_SDE, mSB}:
\begin{proposition}[Feynman-Kac formula] 
\label{non_linear_feynman_kac}
Given $\beta, \gamma>0$, the stochastic representation of the solution follows 
\begin{align}
    \overleftarrow y_s&=\E\bigg[\overleftarrow y_T -\int_s^T {\Gamma_{\zeta}(\overleftarrow\bz_t; \overrightarrow\bz_t)}\dd t \bigg|\overrightarrow\bx_s=\textbf{x}_s\bigg],\notag\\
     \Gamma_{\zeta}(\overleftarrow\bz_t; \overrightarrow\bz_t)& \footnotesize{\equiv\frac{1}{2} \|  \overleftarrow \bz_t\|_2^2  + \nabla_{\bv} \cdot \big(  \sqrt{\beta}\overleftarrow\bz_t - \bbf_t \big) + \zeta   \overleftarrow \bz_t^\intercal \overrightarrow \bz_t}\label{Gamma_def},
\end{align}
where $\overrightarrow y_t = \log \overrightarrow\psi(\ba_t, t)$ and $\overleftarrow y_t=\log \overleftarrow\varphi(\ba_t, t)$, ${\overrightarrow\bz_t =\sqrt{\beta} \nabla_{\bv} \overrightarrow y_t}$, ${\overleftarrow \bz_t =\sqrt{\beta} \nabla_{\bv} \overleftarrow y_t}$, and $\zeta=1$.
\end{proposition}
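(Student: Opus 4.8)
The plan is to obtain this nonlinear Feynman--Kac identity in the standard way: rewrite the forward Kolmogorov (Fokker--Planck) equation of the Schr\"odinger system as a Hamilton--Jacobi--Bellman equation for the log-potential, and then apply It\^o's formula to that log-potential evaluated along the forward controlled process \eqref{f-sde}, using the PDE to cancel the transport and second-order terms so that $\Gamma_\zeta$ is exactly what remains as the $\mathrm{d}t$-coefficient.

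First I would set $\overleftarrow y=\log\overleftarrow\varphi$ and $\overrightarrow y=\log\overrightarrow\psi$ (so $\overleftarrow\bz=\sqrt\beta\nabla_\bv\overleftarrow y$, $\overrightarrow\bz=\sqrt\beta\nabla_\bv\overrightarrow y$), substitute $\overleftarrow\varphi=e^{\overleftarrow y}$ into $\partial_t\overleftarrow\varphi+\nabla\cdot(\overleftarrow\varphi\bbf)-\tfrac12\mathrm{tr}(\bbg\bbg^\intercal\nabla^2\overleftarrow\varphi)=0$, and divide by $\overleftarrow\varphi$ to get
\[
\partial_t\overleftarrow y+\langle\nabla\overleftarrow y,\bbf\rangle+\nabla\cdot\bbf-\tfrac12\mathrm{tr}(\bbg\bbg^\intercal\nabla^2\overleftarrow y)-\tfrac12\nabla\overleftarrow y^\intercal\bbg\bbg^\intercal\nabla\overleftarrow y=0 .
\]
Since $\bbg\bbg^\intercal$ acts only on the velocity block, $\mathrm{tr}(\bbg\bbg^\intercal\nabla^2\overleftarrow y)$ is, up to a positive constant, $\Delta_\bv\overleftarrow y$, i.e.\ the divergence $\nabla_\bv\cdot(\sqrt\beta\overleftarrow\bz)$; the quadratic term $\nabla\overleftarrow y^\intercal\bbg\bbg^\intercal\nabla\overleftarrow y$ is, up to a constant, $\|\overleftarrow\bz\|_2^2$; and $\nabla\cdot\bbf$ reduces to the $\bv$-divergence of $\bbf_t$. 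These are precisely the three ingredients of $\Gamma_\zeta$ contributed by the backward side.

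Next I would apply It\^o to $t\mapsto\overleftarrow y(\overrightarrow\ba_t,t)$, where by \eqref{f-sde} the forward process has drift $\bbf+\bbg\bbg^\intercal(\bm{0},\nabla_\bv\overrightarrow y)^\intercal$ and diffusion $\bbg$:
\[
\mathrm{d}\overleftarrow y_t=\Big(\partial_t\overleftarrow y+\big\langle\nabla\overleftarrow y,\,\bbf+\bbg\bbg^\intercal(\bm{0},\nabla_\bv\overrightarrow y)^\intercal\big\rangle+\tfrac12\mathrm{tr}(\bbg\bbg^\intercal\nabla^2\overleftarrow y)\Big)\mathrm{d}t+\big\langle\nabla\overleftarrow y,\bbg\,\mathrm{d}\overrightarrow{\mathbf{w}}_t\big\rangle .
\]
Substituting $\partial_t\overleftarrow y$ from the HJB equation cancels the transport term $\langle\nabla\overleftarrow y,\bbf\rangle$ and recombines the two second-order pieces; the cross term $\langle\nabla\overleftarrow y,\bbg\bbg^\intercal(\bm{0},\nabla_\bv\overrightarrow y)^\intercal\rangle$ becomes a multiple of $\overleftarrow\bz_t^\intercal\overrightarrow\bz_t$ --- this is where the coupling constant $\zeta$ appears and must be checked to equal $1$ --- so that $\mathrm{d}\overleftarrow y_t=\Gamma_\zeta(\overleftarrow\bz_t;\overrightarrow\bz_t)\,\mathrm{d}t+\langle\nabla\overleftarrow y,\bbg\,\mathrm{d}\overrightarrow{\mathbf{w}}_t\rangle$. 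Integrating from $s$ to $T$ and taking $\E[\,\cdot\mid\overrightarrow\bx_s=\textbf{x}_s]$, the stochastic integral vanishes in conditional mean (it is a martingale under the integrability implied by $\bu\in\mathcal{U}$, $\beta,\gamma>0$, and the regularity of the Schr\"odinger potentials with Gaussian endpoints $\rho_0,\rho_T$), which yields exactly the stated formula.

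The main obstacle is the bookkeeping inside the last step: one must track two separate second-order contributions --- one produced by It\^o, one surviving from the log-transformed PDE --- together with $\nabla\cdot\bbf$ and the quadratic gradient term, and verify that, under the paper's normalization relating $\bbg$ and $\bz$, they recombine \emph{exactly} into $\nabla_\bv\cdot(\sqrt\beta\overleftarrow\bz_t-\bbf_t)+\tfrac12\|\overleftarrow\bz_t\|_2^2$ and that the forward--backward coupling lands with coefficient $\zeta=1$; a secondary point is stating the integrability hypotheses precisely enough that the It\^o integral genuinely has zero conditional expectation.
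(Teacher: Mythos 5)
Your proof strategy is sound and, as it happens, it is the paper's proof: the paper does not derive Proposition~1 itself but states it with citations to the FB--SDE literature, and the Hopf--Cole transform of the backward Kolmogorov equation followed by It\^o along the controlled forward process, cancellation via the HJB equation, and the vanishing of the stochastic integral in conditional expectation is precisely the derivation given in those cited works. Your identification of the constant bookkeeping and the integrability hypothesis as the two things to check is exactly right.

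One concrete caution on that bookkeeping so you do not conclude your derivation is wrong when it fails to land exactly on the stated $\Gamma_\zeta$. Since $\bbg\bbg^\intercal = \beta\gamma\,\bJ_{2d}$, dividing the backward Kolmogorov equation by $\overleftarrow\varphi = e^{\overleftarrow y}$ gives
\begin{equation*}
\partial_t\overleftarrow y + \langle\nabla\overleftarrow y,\bbf\rangle + \nabla\cdot\bbf - \tfrac{\beta\gamma}{2}\Delta_\bv\overleftarrow y - \tfrac{\beta\gamma}{2}\|\nabla_\bv\overleftarrow y\|_2^2 = 0,
\end{equation*}
and after It\^o along \eqref{f-sde} and substitution, the surviving $\mathrm{d}t$-coefficient is
\begin{equation*}
-\nabla_\bv\cdot\bbf_t + \beta\gamma\,\Delta_\bv\overleftarrow y + \tfrac{\beta\gamma}{2}\|\nabla_\bv\overleftarrow y\|_2^2 + \beta\gamma\,\langle\nabla_\bv\overleftarrow y,\nabla_\bv\overrightarrow y\rangle .
\end{equation*}
With the paper's stated normalization $\overleftarrow\bz = \sqrt{\beta}\,\nabla_\bv\overleftarrow y$ this reads $-\nabla_\bv\cdot\bbf_t + \gamma\big(\sqrt\beta\,\nabla_\bv\cdot\overleftarrow\bz + \tfrac12\|\overleftarrow\bz\|_2^2 + \overleftarrow\bz^\intercal\overrightarrow\bz\big)$, i.e.\ the $\bz$-terms carry an extra factor of $\gamma$ relative to the stated $\Gamma_1$. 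The identity closes with $\zeta=1$ exactly under the alternative (and in the cited FB--SDE papers, standard) normalization $\bz = \sqrt{\beta\gamma}\,\nabla_\bv y$, with the divergence term likewise read as $\nabla_\bv\cdot(\sqrt{\beta\gamma}\,\overleftarrow\bz - \bbf_t)$. So when you complete the computation, make explicit which normalization of $\bz$ you adopt; your argument is correct, but it recovers the proposition as written only under the $\sqrt{\beta\gamma}$ convention.
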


\subsection{Linear Approximation via Multivariate Momentum Diffusion}

Consider a linear approximation of the forward process \eqref{f-sde} with a fixed matrix $\bA_{\ba,t}=\begin{pmatrix}
         \bm{0}_d & \bm{0}_d \\
         \bA_{\bx,t} & \bA_{\bv,t}
        \end{pmatrix}\in \mathbb{R}^{2d\times 2d}$ (referred to as the variational score):
\begin{align}
\mathrm{d} \overrightarrow\ba_t&=\left[\bbf(\overrightarrow\ba_t, t) +  \bbg \bbg^\intercal \bA_{\ba,t} \overrightarrow\ba_t \right]\mathrm{d}t+\bbg \mathrm{d} \overrightarrow{\mathbf{w}_t} \notag \\
        &=-\frac{1}{2}\bD_t\beta \overrightarrow\ba_t \dd t + \bbg\dd \overrightarrow\bw_t \label{FB-SDE-linear-unified}\\
        \bD_t &= \begin{pmatrix}
        0 & -1 \\
        1 & \gamma 
        \end{pmatrix}\otimes \bI_d-2\gamma \bA_{\ba,t}, \notag
\end{align} %
where $\bI - 2 \gamma \bA_{\bx,t}$ and $\bI - 2 \bA_{\bv,t} $ are restricted to a positive-definite matrix.

The mean and covariance of the augmented linear SDE \eqref{FB-SDE-linear-unified} follow that \citep{applied_sde}
\begin{subequations}
\begin{align}
    &\frac{\dd \bmu_{t|0}}{\dd t}=-\frac{1}{2}\beta \bD_t \bmu_{t|0}\label{mu_diffusion}\\
    &\frac{\dd \bSigma_{t|0}}{\dd t}=-\frac{1}{2}\beta \big(\bD_t  \bSigma_{t|0} +\bSigma_{t|0}\bD_t^{\intercal}\big)  + {\beta\gamma}\bJ_{2d} \label{sigma_diffusion},
\end{align}
\end{subequations}
where $\bD_t$ and $\bJ_{2d}$ are defined in Eq.\eqref{FB-SDE-linear-unified} and \eqref{control_diffusion}, respectively. Solving the mean process leads to the solution:
\begin{align}
    \bmu_{t|0}&=e^{-\frac{1}{2} \beta[\bD]_t} \bx_0,\label{mean_dyn}
\end{align}
where $[\bD]_t=\int_0^t \bD_s\dd s$. The covariance process is a differential Lyapunov matrix equation \citep{applied_sde} and can be solved by decomposing $\bSigma_{t|0}$ as $\bC_t \bH_t^{-1}$, where $\bC_t$ and $\bH_t$ follow that:
\begin{align}\label{cov_dynamics}
  \begin{pmatrix}
    \bC_t \\
    \bH_t
  \end{pmatrix} &=
  \exp\Bigg[
  \begin{pmatrix}
    -\frac{1}{2}\beta[\bD]_t & \gamma \beta\big[ \bm{\mathrm{J}}_{2d}\big]_t \\
    \bm{0} &  \frac{1}{2} \beta[\bD^{\intercal}]_t
  \end{pmatrix}
  \Bigg]
        \begin{pmatrix}
          {\bSigma}_0 \\
          {\bI_{2d}} 
        \end{pmatrix}.            
\end{align}
Additional speed-ups can be achieved on real-world datasets by avoiding the matrix exponential through the use of a time-invariant and diagonal $\bD$, as detailed in Appendix A of \cite{VSDM}.

Next, we can achieve the simulation-free update of the multivariate momentum diffusion as follows 
\begin{align}
    \overrightarrow\ba_t = \bmu_{t|0} + \bL_t\bepsilon,\label{update_as}
\end{align}
where $\bmu_{t|0}\sim\eqref{mean_dyn}$, $\bL_t$ is a lower-triangular matrix that satisfies $\bL_t \bL_t^{\intercal}=\bSigma_{t|0}$, and $\bepsilon \in\mathbb{R}^{2d}$ is a Gaussian vector. The forward PDF follows that
\begin{align*}
    \overrightarrow{\rho}_{t|0}(\overrightarrow\ba_t) &\propto \exp\bigg\{ -\frac{1}{2}(\overrightarrow\ba_t - \bmu_{t|0})^{\intercal} \bSigma_{t|0}^{-1} (\overrightarrow\ba_t - \bmu_{t|0}) \bigg\},
\end{align*}
which leads to a score function as follows
\begin{align}
     \nabla \log  \overrightarrow{\rho}_{t|0}(\overrightarrow\ba_t) &= -\bSigma_{t|0}^{-1}(\overrightarrow\ba_t-\bmu_t)=-\bL_t^{-\intercal}\bepsilon.\label{score_expression}
\end{align}
We next resort to a neural network parametrization $s_{t}(\cdot)$ via the following loss function to learn the score:
\begin{equation}\label{MDM_loss}
    \nabla_{\theta} \|-\bL_t^{-\intercal}\bm{\epsilon}-s_{t}(\overrightarrow\ba_t)\|_2^2.
\end{equation}

\subsubsection{Backward SDE}\label{sec:backward_sde}
The backward process follows by taking the time reverse \cite{Anderson82} of the forward process \eqref{FB-SDE-linear-unified}:
\begin{align}
    \dd \overleftarrow\ba = -\frac{1}{2}\bD_t\beta\overleftarrow\ba_t \dd t - \bbg \bbg^\intercal s_t(\overleftarrow\ba)\dd t + \bbg\dd \overleftarrow\bw_t,\label{backward_process}
\end{align}
where the prior distribution is restricted to a Gaussian distribution following $\ba_T \sim \mathrm{N}(\bm{0}, \bSigma_{T|0})$ as in Eq.\eqref{sigma_diffusion}.

\subsection{Adaptively Transport-Optimized Diffusion}

Among the infinite transportation plans, we aim to obtain the optimal variational scores $\bA^{\star}_{\ba,t}$ to ensure efficient transport. For that end, we resort to the SOC objective under a linearized SDE constraint:
\begin{align}
    &\inf_{\bA_{\bx}, \bA_{\bv}\in \mathbb{R}^{d\times d}} \E\bigg\{\int_0^T \frac{1}{2}\bigg\|\bA_{\ba,t} \overrightarrow\ba_t\bigg\|^2_2 \mathrm{d}t \bigg\} \notag \\
    \text{s.t.} &\ \ \mathrm{d} \overrightarrow\ba_t=\left[\bbf(\overrightarrow\ba_t, t) +  \bbg \bbg^\intercal \bA_{\ba,t} \overrightarrow\ba_t \right]\mathrm{d}t+\bbg \mathrm{d} \overrightarrow{\mathbf{w}_t}. \notag \\
        &\overrightarrow\ba_0\sim  \rho_{0}:=p_{\text{data}}\otimes p_{\bv_0} ,\ \overrightarrow\ba_T\sim  \rho_{T}:=p_{\text{prior}}\otimes p_{\bv_T}. \notag
\end{align}

Since the diffusion from $\rho_0$ to $\rho_T$ is nonlinear in general, a closed-form solution is often intractable. \cite{SB_closed_form} studied the analytic solution of Gaussian SB based on a Langevin prior process, however, the ULD prior process is still not well studied.

To tackle this issue, we first build a loss function through the Feynman-Kac formula in Proposition \ref{non_linear_feynman_kac}:
\begin{align}
    \overrightarrow{\mathcal{L}}(\bA)&=\small{-\int_0^T \E_{\overleftarrow\bx_t\backsim \eqref{backward_process}}\bigg[\Gamma_{\zeta}(\bA_{\ba,t}\ba_t; \overleftarrow\bz^{\theta}_t)\dd t \bigg|\overleftarrow\ba_T\bigg]}\label{SB-loss-f},
\end{align}
where $\overleftarrow\bz^{\theta}_t$ is the approximation of $\overleftarrow\bz_t$ in Eq.\eqref{Gamma_def}.

We next employ stochastic approximation (SA) \citep{RobbinsM1951, Albert90} to optimize the variational score $\bA_{\ba,t}$ adaptively for achieving more efficient transportation plans. 
\begin{itemize}
\item[(1)] Sample $\{\overleftarrow\bx^{(k+1)}_{t_i}\}_{i=0}^{N-1}$ via the backward SDE \eqref{backward_process};
\item[(2)] Stochastic approximation of $\big\{\bA^{(k)}_{\ba, t_i}\}_{i=0}^{N-1}$: 
$$\bA_{\ba, t_i}^{(k+1)}=\bA_{\ba, t_i}^{(k)}-\eta_{k+1} \nabla\overrightarrow{\mathcal{L}}_{t_i}(\bA_{\ba, t_i}^{(k)}; \overleftarrow\bx^{(k+1)}_{t_i}),$$
\end{itemize}
where $\eta_{k+1}$ is the step size, $\{t_0, t_1, \cdots, t_{N-1}\}$ is a collection of time discretization through the Euler–Maruyama (EM) or symmetric splitting scheme \citep{CLD}, $\nabla\overrightarrow{\mathcal{L}}_{t_i}(\bA_{\ba, t_i}^{(k)}; \overleftarrow\bx^{(k+1)}_{t_i})$ is the stochastic gradient of Eq.\eqref{SB-loss-f} at time $t_i$ given $\overleftarrow\bx^{(k+1)}_{t_i}$. 

We expect that under mild assumptions, $\{\bA^{(k)}_{\ba,t}\}_t$ will converge to a local optimum $\{\bA^{\star}_{\ba,t}\}_t$ that yields sub-optimal transport properties and the score function $\{s^{\theta_{\star}}_t\}$ given $\{\bA^{\star}_{\ba,t}\}_t$ will be more effective to handle complex transport problems compared to the vanilla $\bA^{(k)}_{\ba,t}\equiv \bm{0}$ in CLD.

\paragraph{Connections to Half-bridge Solvers} 

mSB proposes to solve a general half-bridge (left) associated with the forward SDE \eqref{control_diffusion} for optimal transport. For scalability, the linear approximation in Eq.\eqref{FB-SDE-linear-unified} has limited the solution space into a class of generalized Ornstein-Uhlenbeck (gOU) processes (right):
\begin{align*}
    \argmin_{\mathbb{P}\in \mathcal{D}(\rho_{\text{data}},\  \cdot)} \text{KL}(\mathbb{P}\|\mathbb{P}_{2k-1}) \rightarrow \argmin_{\mathbb{\widehat P}\in \text{gOU}(\rho_{\text{data}},\  \cdot)} \text{KL}(\mathbb{\widehat P}\|\mathbb{P}_{2k-1})
\end{align*}
where $\mathcal{D}(\rho_{\text{data}}, \cdot)$ and $\text{gOU}(\rho_{\text{data}},\  \cdot)$ denote the classes of path couplings from $t=0$ to $T$ and the initial marginal follows $\rho_{\text{data}}$. The solution $\mathbb{\widehat P}$ acts as a local optimum of the optimal transport solution.

\subsection{Stabilization via Damping Transform}

We rewrite the forward process \eqref{FB-SDE-linear-unified} as a coupled probability flow ODE \citep{score_sde}
\begin{align*}
\dd\bx_t &= \frac{1}{2} \beta \bv_t \dd t \\
\dd\bv_t &= -\bigg[\bar\gamma \bv_t+\frac{2}{\beta}\bar\omega_0^2 \bx_t + \frac{1}{2} \beta\gamma \nabla_{\bv} \log  \overrightarrow{\rho}_{t|0}(\overrightarrow\ba_t)\bigg]\dd t.\notag
\end{align*}
where $\bar\gamma=\frac{1}{2} \beta (\gamma - 2\gamma \bA_{\bv,t})$, $\bar\omega_0^2=\frac{1}{4} \beta^2 (1-2 \gamma \bA_{\bx,t})$.

Regarding the balance between the mass oscillation and damping \citep{classical_mechanics}, we rewrite the coupled equations into a second-order differential equation:
\begin{align*}
&\frac{\dd^2 \bx_t}{\dd t^2}+  \bar\gamma \frac{\dd \bx_t}{\dd t} +\bar\omega_0^2 \bx_t +\frac{1}{2} \beta\gamma \nabla_{\bv}\log  \overrightarrow{\rho}_{t|0}(\overrightarrow\ba_t)=0.
\end{align*}

Applying the case of critical damping \citep{classical_mechanics}, we have that
\begin{align}
&\bar\gamma^2 =4 \rmR \bar\omega_0^2,\notag
\end{align}
where $\rmR \in (0, 1]$ is a scalar. The trade-off between oscillation and damping w.r.t. different $\rmR$ leads to two algorithms \citep{classical_mechanics, CLD}: 
\begin{itemize}
    \item $\rmR=1$ corresponds to critical damping (VSCLD);
    \item $\rmR<1$ leads to under-damping (VSULD).
\end{itemize}

After some transformations, we have that
\begin{align}
\bA_{\bv,t} = \frac{1}{2} - \frac{1}{\gamma}\sqrt{\rmR(1-2 \gamma \bA_{\bx,t})}.\label{damp_transform}
\end{align}

The above equation indicates that instead of training two modules $\bA_{\bv,t}$ and $\bA_{\bx,t}$, we can solely train one module such as  $\bA_{\bv,t}$ and apply the transformation \eqref{damp_transform} to infer the other. Such a transformation has greatly stabilized the training and alleviated the training cost.

As observed in \cite{CLD}, under-damping often yields fast mixing while compromising the smoothness of the trajectory. Empirically, we observe that under-damping can be much faster than critical damping and may only slightly decrease the straightness of the trajectories, which motivates us to tune $\rmR$ to obtain the best trade-off. Now we present our algorithm in Algorithm \ref{VSMD_alg}.

\begin{algorithm*}[!tb]
   \caption{Variational Schr\"odinger Momentum Diffusion (VSMD). The variational scores $\bA^{(0)}_{\ba}$ are initialized to $\bm{0}$ by default. Specify the diffusion hyperparameters $\beta, \gamma$. The damping ratios $\rmR=1$ and $\rmR<1$ correspond to the VSCLD and VSULD algorithms, respectively, balancing oscillation and damping. Given adaptively optimized $s_t^{(k+1)}$, $\overleftarrow\ba_{0}$ can be generated through the backward SDE \eqref{backward_process}. The continuous dynamics can be empirically discretized through the EM or symmetric splitting scheme.}
   \label{VSMD_alg}
\begin{algorithmic}
\REPEAT
   \STATE{\textbf{Optimization of the Score Function $s_t$ via Cached Dynamics}}
   \STATE{\text{Draw $\ba_0\sim p_{\text{data}}\otimes \mathrm{N}(\bm{0}, \bI)$}, compute the mean process $\bmu_{t|0}$ and $\begin{pmatrix}
    \bC_t \\
    \bH_t
  \end{pmatrix}$ by Eq.\eqref{mean_dyn} and \eqref{cov_dynamics}, respectively.} 
  \STATE{Compute the covariance $\bSigma_{t}=\bC_t \bH_t^{-1}$ and the Cholesky factor $\bL_{t}^{-\intercal}$, where $\bL_t \bL_t^{\intercal}=\bSigma_{t|0}$. Store $\bmu_{t|0}$, $\bSigma_{t|0}$, and $\bL_{t}^{-\intercal}$ in cache to speedup calculations.}
    \STATE{\text{Draw $\ba_t|\ba_0\sim \mathrm{N}(\bmu_{t|0}, \bSigma_{t|0})$ and $\bm{\epsilon}\sim\mathrm{N}(\bm{0}, \bI)$}.  \text{Optimize loss function to learn the score} $s^{(k+1)}_t$:
\begin{equation*} 
    \nabla_{\theta} \|-\bL_t^{-\intercal}\bm{\epsilon}-s^{(k+1)}_{t}(\overrightarrow\ba_t)\|_2^2.
\end{equation*}}

   \STATE{\textbf{Stochastic Approximation of Variational Scores $\bA_{\ba,t}$}}
   \STATE{\text{Simulate $\overleftarrow\bx^{(k+1)}_t$ via Eq.\eqref{backward_process} and optimize $\bA_{\bx, t}^{(k+1)}$ through the updates:}}
   \begin{equation*}
       \bA_{\bx, t}^{(k+1)}=\bA_{\bx, t}^{(k)}-\eta_{k+1} \nabla_{\bA_{\bx}}\overrightarrow{\mathcal{L}}_{t}(\bA_{\ba, t}^{(k)}; \overleftarrow\bx^{(k+1)}_{t}).
   \end{equation*}
   \STATE{Compute the damping transform $\bA^{(k+1)}_{\bv,t} = \frac{1}{2} - \frac{1}{\gamma}\sqrt{\rmR(1-2 \gamma \bA^{(k+1)}_{\bx,t})}$.}
   \UNTIL{The accuracy meets the criteria.}
    \vskip -3 in
\end{algorithmic}
\end{algorithm*}




\section{Empirical Studies}
\subsection{Simulations}

We investigate anisotropic generation using two datasets: spiral and checkerboard. Specifically, we stretch the Y-axis of the spiral dataset by a factor of 8 and the X-axis of the checkerboard dataset by a factor of 6, referring to these modified datasets as spiral-8Y and checkerboard-6X, respectively.

\paragraph{Anisotropic Generation} 

We analyze CLD, ULD, VSCLD, and VSULD with various $\beta$ values, denoting them as CLD-$\beta$, ULD-$\beta$, VSCLD-$\beta$, and VSULD-$\beta$. The root mean square error (RMSE) of the probability mass functions (PMFs) between the generated samples and ground-truth samples is measured to assess performance.

Initially, we experiment with CLD-5 and observe that it fails to generate content effectively in the stretched dimension, as shown in Figure \ref{fig:VSULD_vs_VSDM}. In contrast, our VSULD model, with a damping ratio of 0.7, utilizes a faster speed for the stretched dimension and a slower speed for the non-stretched dimension, accurately addressing anisotropic generation.

\begin{figure}[!ht]
  \centering
  \vspace{-0.05in}
    \subfigure{\includegraphics[scale=0.125]{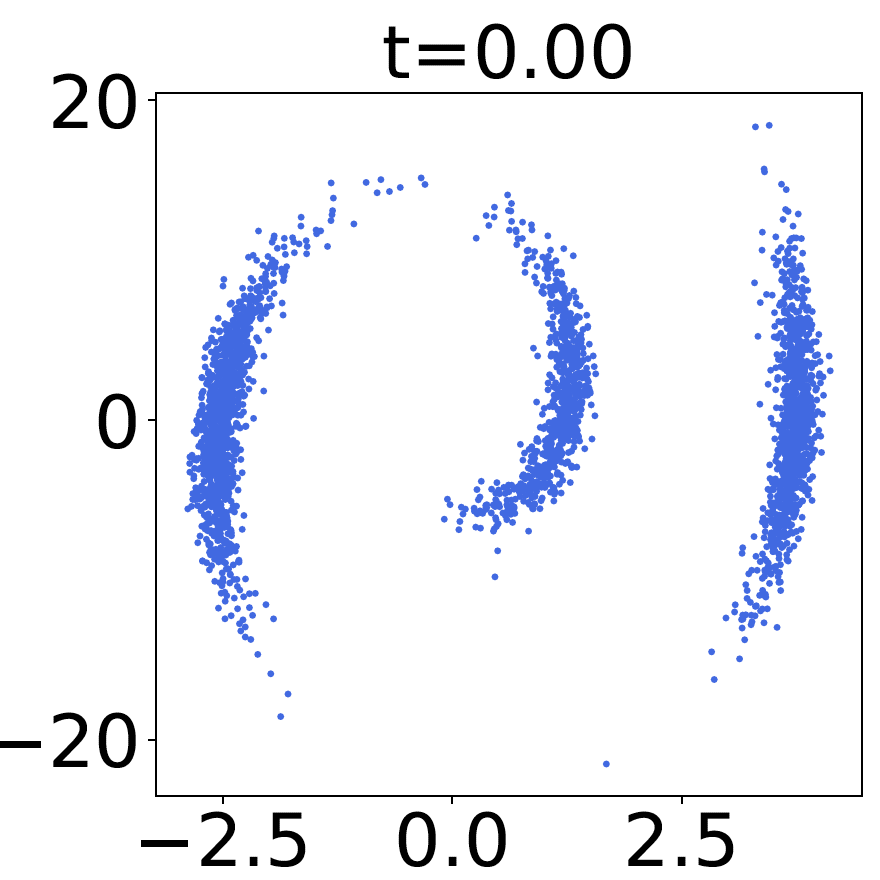}}
    \subfigure{\includegraphics[scale=0.125]{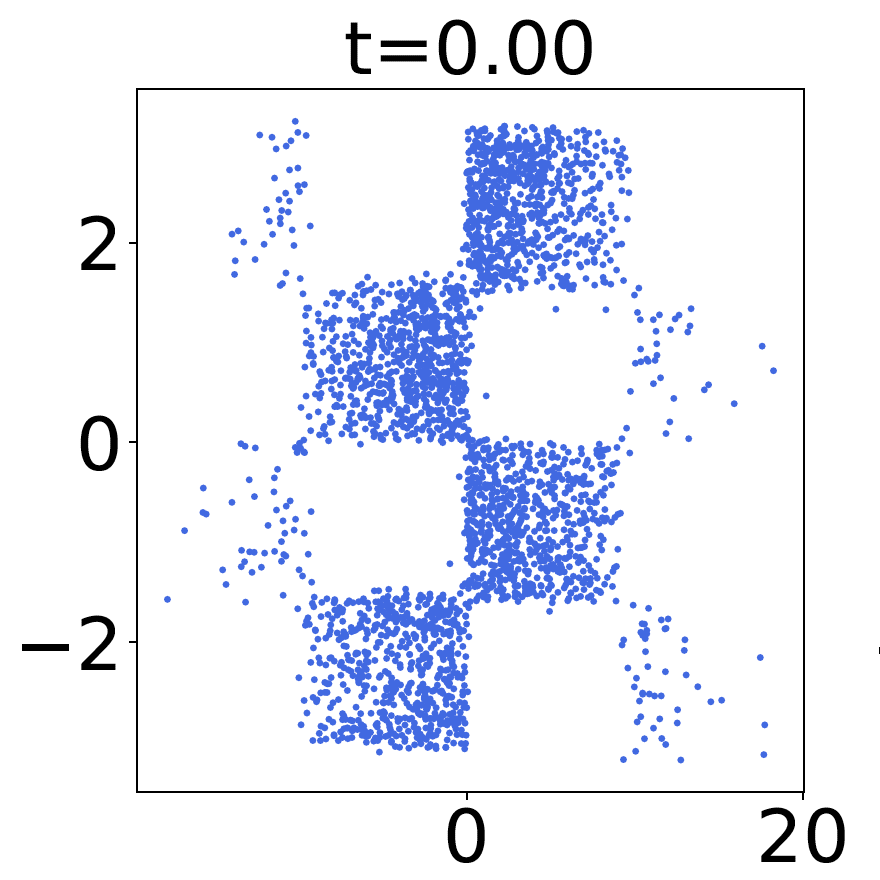}}
    \subfigure{\includegraphics[scale=0.125]{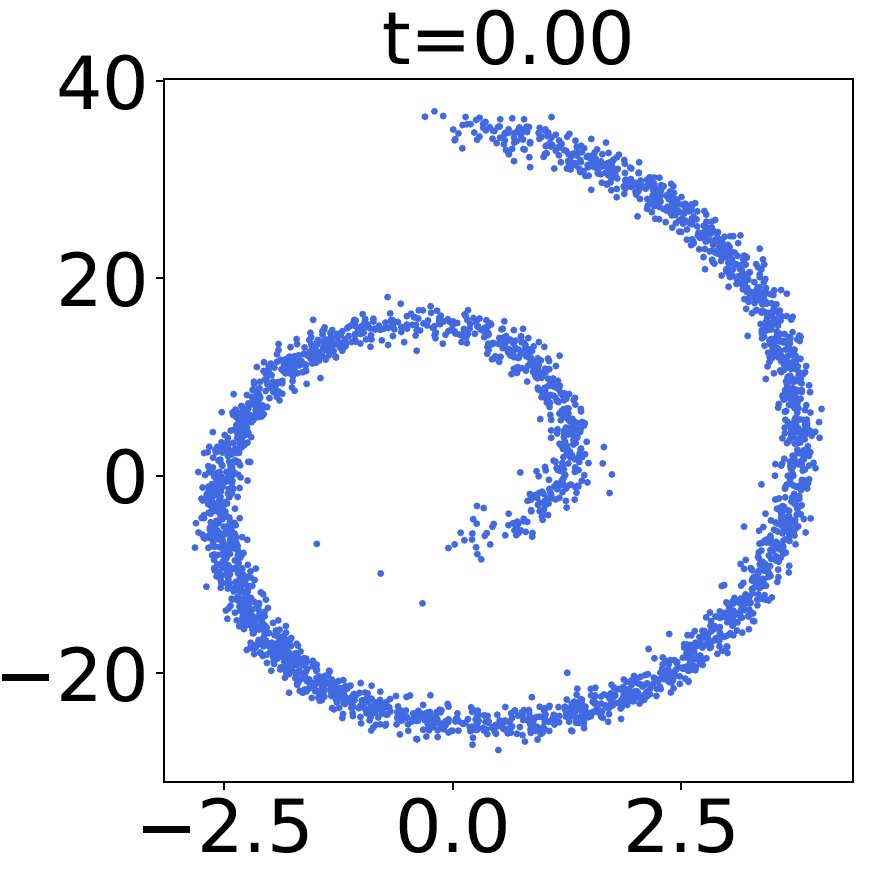}}
    \subfigure{\includegraphics[scale=0.125]{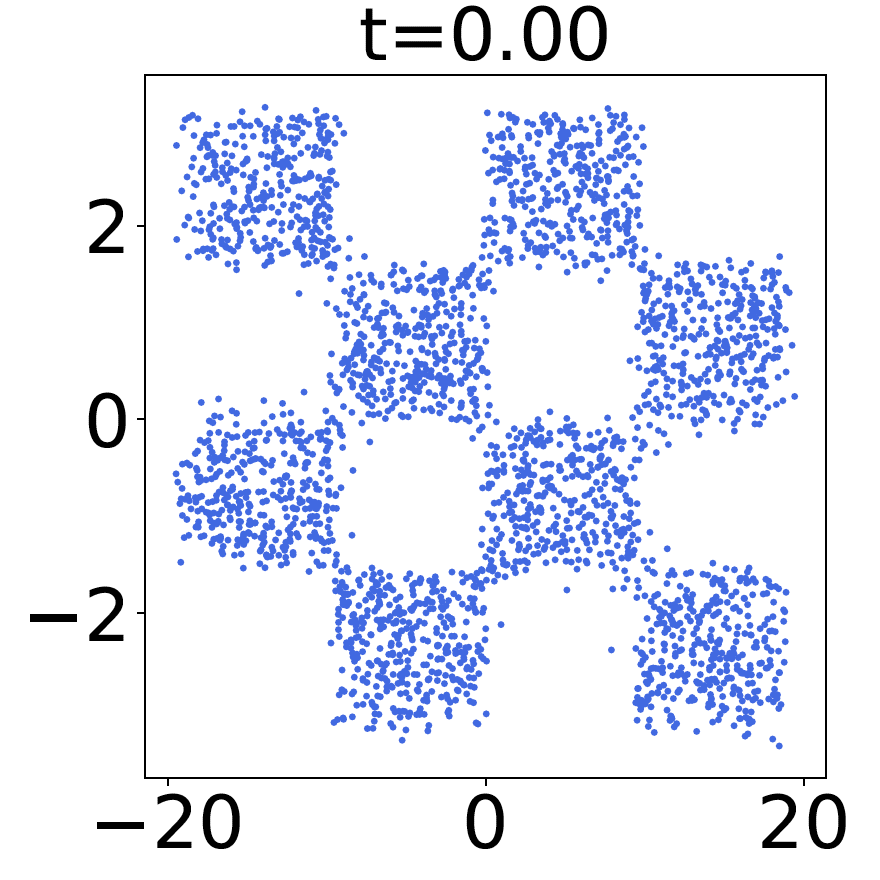}}
    \vspace{-0.1in}
  \caption{CLD-5 (left two) v.s. VSULD-5 (right two) on spiral-8Y and checkerboard-6X.}\label{fig:VSULD_vs_VSDM}
  \vspace{-1em}
\end{figure}

\paragraph{Trade-off between Sample Quality and Transport Efficiency} To improve the anisotropic generation of CLD, we increase $\beta$ and observe in Figure \ref{fig:generation_quality} that CLD-10 and ULD-10 exhibit comparable generation quality to VSULD-5. Additionally, we find that underdamped models such as ULD-5 and VSULD-5 converge faster than the critically-damped counterparts like CLD-5 and VSCLD-5, yielding slightly better sample quality.

\begin{figure}[!ht]
  \vspace{-0.07in}
  \subfigure[\small{Spiral-8Y}]{\includegraphics[scale=0.26]{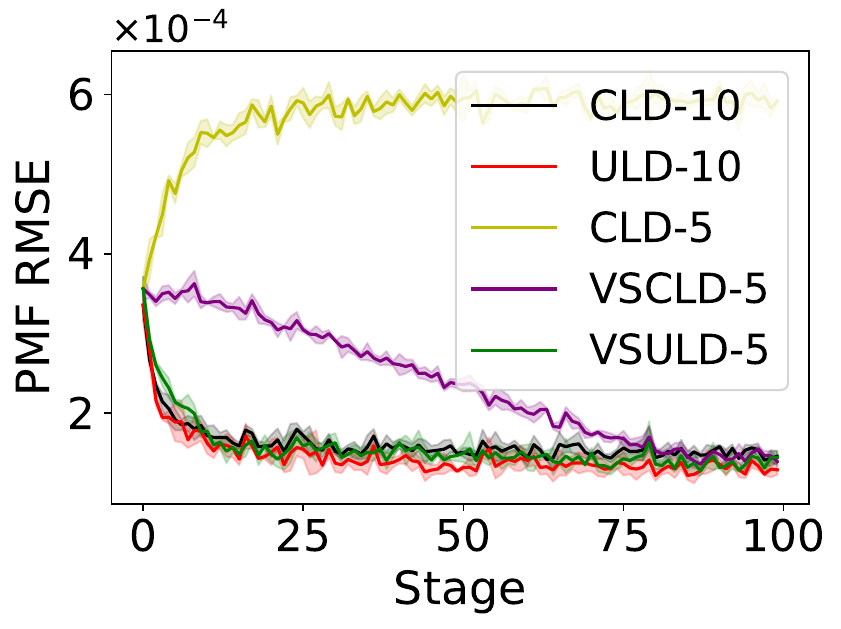}}\quad
  \subfigure[\small{Checkerboard-6X}]{\includegraphics[scale=0.26]{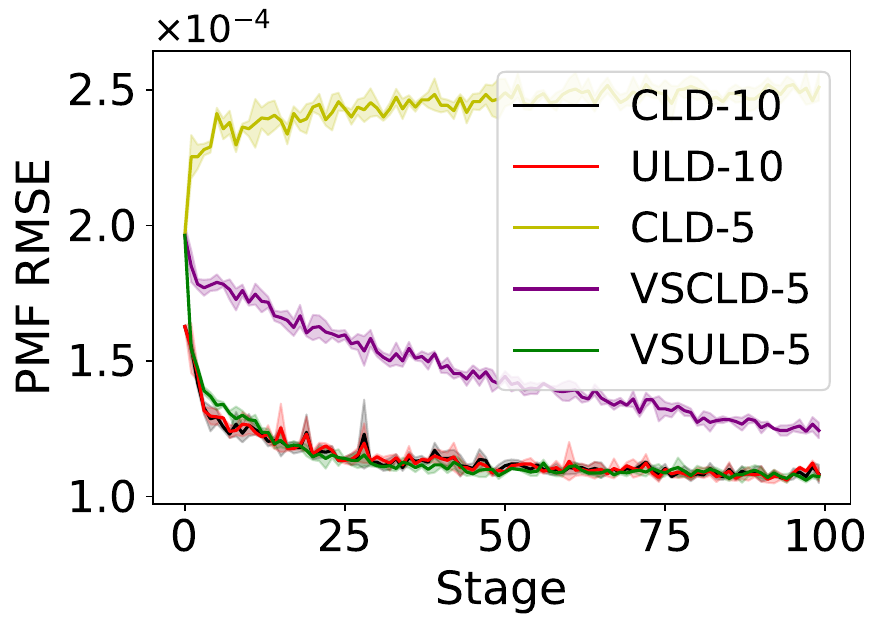}}
  \vskip -0.15in
  \caption{Sample quality evaluation. The damping ratios for ULD and VSULD are both fixed to 0.7.}\label{fig:generation_quality}
\end{figure}

Increasing $\beta$ significantly enhances anisotropic generation for CLD and ULD. However, a large $\beta$ results in inefficient transport for the non-stretched dimension (e.g., the X-axis of the spiral dataset). Specifically, evaluating the straightness metric as suggested in \cite{VSDM}, we observe in Figure \ref{Fig:trajectories_dynamics} that both ULD-10 and CLD-10 show significantly worse straightness compared to models with $\beta=5$, such as CLD-5, VSCLD-5, and VSULD-5. Furthermore, critically-damped models demonstrate marginally better straightness metrics than underdamped models, indicating a trade-off between convergence speed and transport efficiency.

\begin{figure}[!ht]
  \vspace{-0.07in}
  \subfigure[\small{Spiral-8Y}]{\includegraphics[scale=0.26]{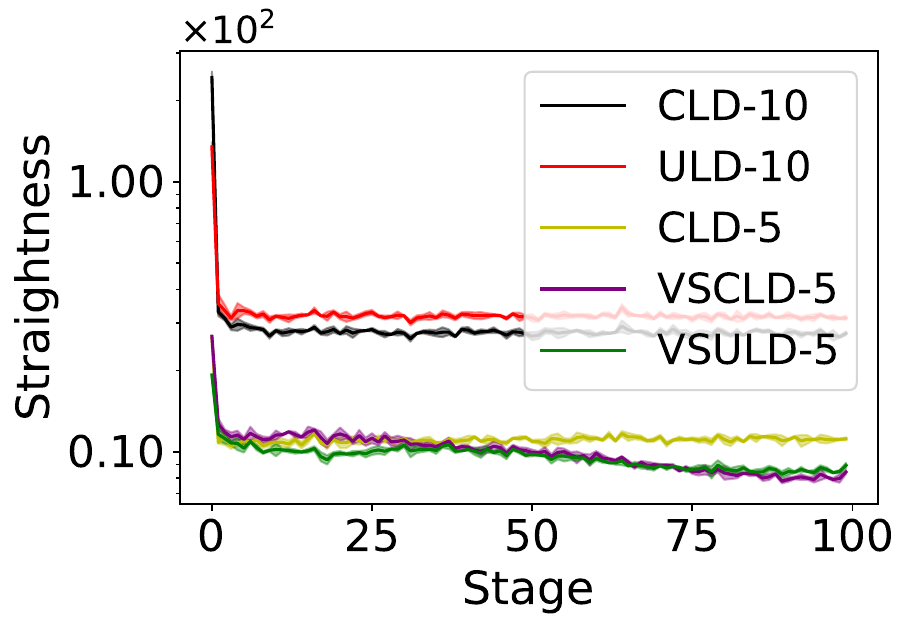}} 
  \subfigure[\small{Checkerboard-6X}]{\includegraphics[scale=0.26]{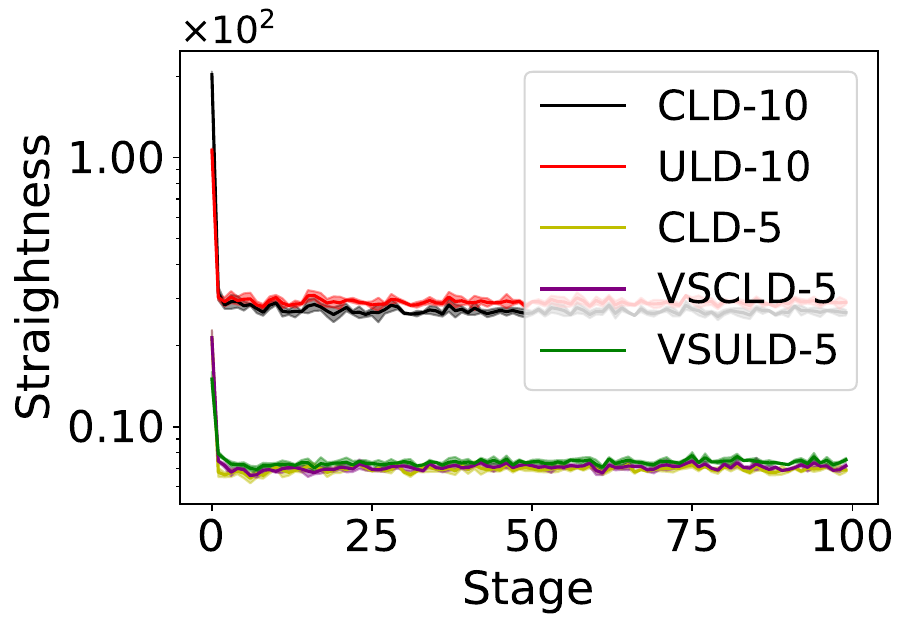}}
  \vskip -0.1in
  \caption{Straightness metric of probability flow ODEs on the non-stretched dimension via CLD, ULD, VSCLD, and VSULD.}\label{Fig:trajectories_dynamics}
\end{figure}

\paragraph{Overdamped v.s. Underdamped}

We also compare VSULD-5 models with VSDM using a fixed $\beta$ value ($\beta = 5$, VSDM-5) and the same VPSDE schedule as in \cite{VSDM} with $\beta_{\max}=10$ (VSDM-10 (VP)). Figure \ref{fig:underdamped_over_damped} shows that VSDM-5 and VSDM-10 (VP) are overall comparable, and VSULD consistently outperforms the overdamped alternatives in terms of accuracy and speed.

\begin{figure}[!ht]
  \subfigure[\small{Spiral-8Y}]{\includegraphics[scale=0.27]{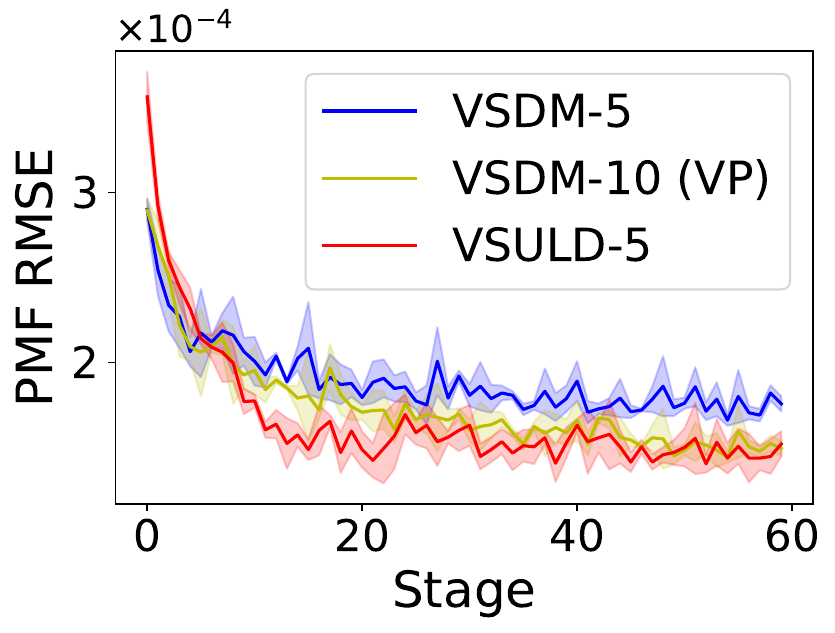}}
  \subfigure[\small{Checkerboard-6X}]{\includegraphics[scale=0.27]{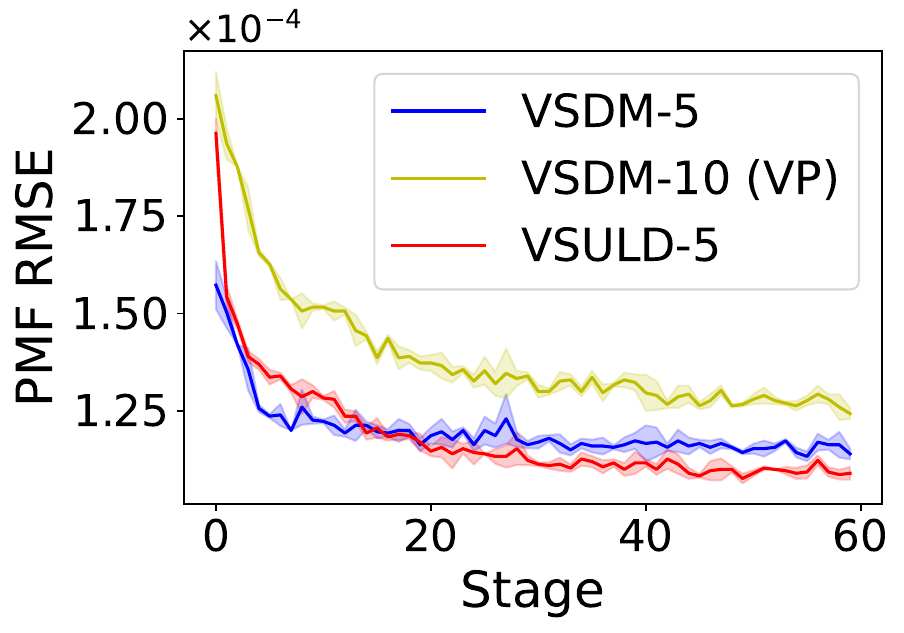}} \vskip -0.1in
  \caption{Overdamped versus underdamped models.}\label{fig:underdamped_over_damped}
\end{figure}

\subsection{Time Series Forecasting}
We demonstrate our models ability in a real world multivariate probabilistic forecasting. Given a sequence $x_{1:N} = \{(t_i, x_i)\}_{i=1}^N$ where $t_i\in\R$ is a time variable and $x_i \in \R^d$. Our goal is to predict the next elements of this sequence, that is predict $x_{N+1}, \dots, x_{N+P}$ for some time points $t_{N+1}, \dots t_{N+P}$. 

We follow the same approach as in \cite{VSDM} and encode the sequence $x_{1:N}$ into a vector $h_i \in \R^h$. We then train a conditional  diffusion model to predict $x_{n+1} | h_n$. Such a model allows generating the entire prediction sequence in an auto-regressive fashion as in \cite{rasul2021autoregressive}.

We utilize a similar U-Net architecture as as in \cite{VSDM}. We use a second order Heunn method as introduced in \cite{EDM}. To the best of our knowledge the use of second order samplers had not been explored in the time series forecasting problem. As expected, this change significantly improves the forecasts. Due to the autoregressive nature of the method it is important to reduce the error in early stages to prevent the model from drifting away. The training is performed on a laptop Geforce RTX 4070 with $8GB$ of VRAM.

\begin{table}[h!]
    \centering
    \begin{tabular}{lccc}
        \toprule
        \textbf{} & \textbf{Electricity} & \textbf{Exchange} & \textbf{Solar} \\
        \midrule
        \textbf{CLD}   & 0.2115          & \textbf{0.0069}  & 0.4891 \\
        \textbf{VSDM}  & 0.0492          & 0.0070           & 0.4726 \\
        \textbf{VSCLD} & 0.0575          & 0.0137           & 0.5325 \\
        \textbf{VSULD} & \textbf{0.0398} & 0.0098           & \textbf{0.4628} \\
        \bottomrule
    \end{tabular}
    \caption{Performance comparison on Electricity, Exchange, and Solar in the CRPS-Sum metric}
    \label{tab:crps_sum}
\end{table}



We test on the exchange rate dataset which contains $6071$ $8$-dimensional measurements every day. The solar dataset is a $137$ dimensional dataset with $7009$ values measured every hour. Finally the electricity dataset is an hourly dataset with $370$ dimensions with $5833$ measurements. In Table \ref{tab:crps_sum} we demonstrate the value of CRPS-Sum of our method. We compare against CLD, and VDSM using the same architecture and the improved sampler.  We present forecasts in the first three dimensions for the solar dataset in figure \ref{Fig:forecasts-solar-3dims}, forecasts for other datasets and methods in the appendix. 

\begin{figure}[!ht]
  \includegraphics[width=\linewidth]{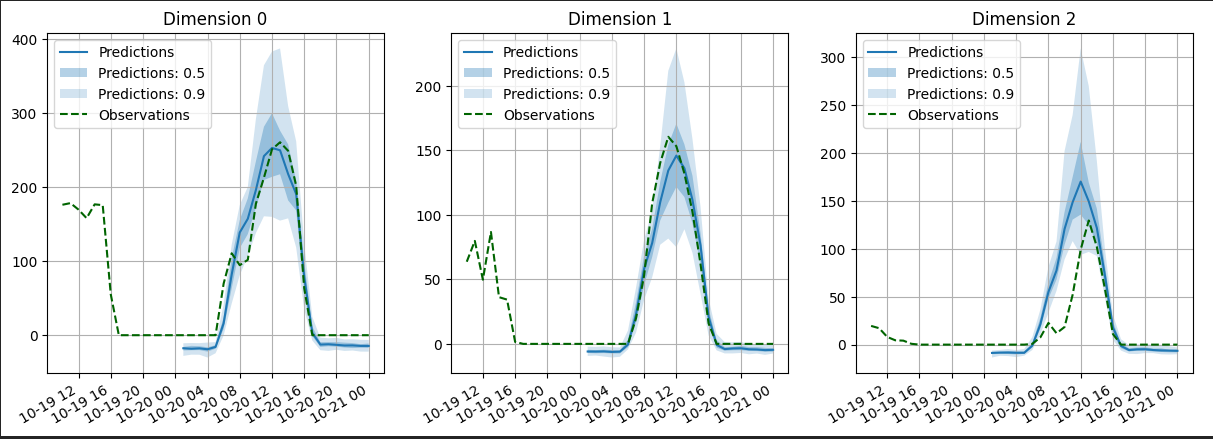}
  \caption{Sample forecasts of our method in the solar dataset} \label{Fig:forecasts-solar-3dims}
\end{figure}
\subsection{Image experiments}
\textbf{Experiment Setup} We test the scalability of our method by training an unconditional generative model on the CIFAR-10 dataset. We make use of the critical damping transformation to perform this experiment. We train our model in $8$ NVIDIA V100-16GB GPUs with a batch size of $256$. We follow standard practices and use the EMA during inference where we made use of the second order Heun's method to discretize the probability flow ODE. We present some sample images in Figure \ref{Fig:cifar-samples}. 

One natural concern is that when the variational score gets updated, this changes the dynamics and the target distribution of the backwards process. Then the score needs to be correctly updated to revert for these changes. To circunvent this we make use of a step learning rate schedule with parameter $.99$. Doing so allows to keep the variational scores from changing drastically towards the later parts of training. This in combination with the stochastic approximation technique described in \ref{sec:backward_sde} allows for a stable training and annealing of the variational scores.

\begin{figure}[!ht]
  \includegraphics[width=\linewidth]{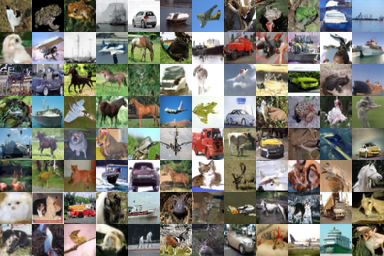}
  
  \caption{Unconditional generated samples using VSCLD on CIFAR-10} \label{Fig:cifar-samples}
\end{figure}

\section{Conclusions and Future Works}

Momentum Schrödinger bridge diffusion models provide a principled framework for studying generative models with optimal transport properties. However, achieving optimal transportation plans is often prohibitively expensive in real-world scenarios. To address the scalability issue, we propose the Variational Schrödinger Momentum Diffusion (VSMD) model, a scalable multivariate diffusion model that enables simulation-free training of backward scores, and the forward scores are optimized adaptively for more efficient transportation plans. Motivated by kinetic (second-order) Langevin dynamics, the inclusion of velocity components enhances training and sampling efficiency and eliminates the need for a complex denoising process. For future work, we aim to further simplify the forward diffusion process while maintaining efficient transportation plans to support more scalable applications.

\section*{Acknowledgements}

We thank the anonymous reviewers and area chairs for their insightful feedback and suggestions.


\bibliography{mybib, other}
\bibliographystyle{apalike}

\newpage

\appendix

\newpage 
\onecolumn

\begin{large}
\begin{center}
    \textbf{Supplementary Material for \textbf{``Variational Schr\"odinger Momentum Diffusion''}}
\end{center}
\end{large}
$\newline$

\section{Kinetic (second-order) Langevin Dynamics}

\subsection{Different Damping Regimes}
We follow \cite{CLD} and study the different damping regimes for the multivariate kinetic Langevin diffusion process:
\begin{align}\label{CLD_dyn_appendix}
   \begin{pmatrix}
    \dd \bx_t \\
    \dd \bv_t
   \end{pmatrix} &= 
   \underbrace{\frac{\beta}{2} \begin{pmatrix}
    \bv_t \\
    -(1- 2\gamma \bA_{\bx,t})\bx_t 
   \end{pmatrix}\dd t}_{\text{Hamiltonian component}} + \underbrace{\frac{\beta \gamma}{2} \begin{pmatrix} \bm{0}\\ -\big(1 - 2 \bA_{\bv,t}\big) \bv_t \end{pmatrix} \dd t + \begin{pmatrix} \bm{0} \\ \sqrt{\beta\gamma} \bI_{d}\end{pmatrix} \dd\bw_t}_{\text{Ornstein-Uhlenbeck process: O}}.
\end{align}

\begin{itemize}
    \item overdamped Langevin dynamics (LD): a high friction limit of \eqref{CLD_dyn_appendix} without momentum (Hamiltonian component) acceleration.  LD requires $\Omega(d/\epsilon^2)$ iterations to achieve an $\epsilon$ error in 2-Wasserstein (W2) distance for strongly log-concave distributions \citep{Dalalyan18}.
    \item critically-damped Langevin dynamics (CLD) via $\rmR=1$ in Eq.\eqref{damp_transform}: theoretically optimal trade-off between mass oscillation and damping \citep{classical_mechanics}. However, in practice, we may need to tune the damping ratio $\rmR$ to achieve the best balance between acceleration and transport efficiency.
    \item underdamped Langevin dynamics (ULD) via $\rmR<1$: the Hamiltonian component plays a crucial role and also induces more oscillatory behavior. ULD requires only $\Omega(\sqrt{d}/\epsilon)$ (instead of $\Omega(d/\epsilon^2)$ via LD) iterations to achieve an $\epsilon$ error in W2 for strongly log-concave distributions \citep{ULD_convergence}. 
\end{itemize}


\subsection{Numerical Schemes}

The Euler–Maruyama scheme for the backward kinetic Langevin diffusion in Eq.\eqref{backward_process} 
follows that
\begin{equation}
\begin{split}
\label{EM_scheme}
\overleftarrow\bx_{(n-1)h}&=\overleftarrow\bx_{nh} - \frac{h}{2}\beta  \overleftarrow\bv_{nh} \\
\overleftarrow\bv_{(n-1)h}&=\overleftarrow\bv_{nh} + \frac{h}{2}\beta (1-2\gamma \bA_{\bx,nh}) + \frac{h}{2}\beta\gamma \big(1 - 2 \bA_{\bv,nh}\big) \overleftarrow\bv_{nh} +h\beta\gamma s_{nh}(\overleftarrow\ba_{nh})+\sqrt{\beta\gamma h} \bm{\overleftarrow\xi}_{nh},
\end{split}
\end{equation}
where $h$ is the learning rate.

Theoretically, the Euler–Maruyama scheme \eqref{EM_scheme} suffers from instability with large discretization step sizes. Motivated by the symplectic Euler for the Hamiltonian system, we consider the symmetric splitting (S2) scheme \citep{tuckerman2010statistical, leimkuhler2013rational, CLD} for the kinetic Langevin dynamics to ensure better stability. To that end, we first compose the Hamiltonian component in Eq.\eqref{CLD_dyn_appendix} into two parts:
\begin{align}
\label{CLD_dyn_discrete_scheme}
   \begin{pmatrix}
    \dd \overleftarrow\bx_t \\
    \dd \overleftarrow\bv_t
   \end{pmatrix} &= 
   \underbrace{\frac{\beta}{2} \begin{pmatrix}
    \overleftarrow\bv_t \\
    \bm{0} 
   \end{pmatrix}\dd t}_{\text{A}} + \underbrace{\frac{\beta}{2} \begin{pmatrix}
    \bm{0} \\
    -(1- 2\gamma \bA_{\bx,t})\overleftarrow\bx_t 
   \end{pmatrix}\dd t}_{\text{B}} + \underbrace{\frac{\beta\gamma}{2} \begin{pmatrix} \bm{0}\\ -\big(1 - 2 \bA_{\bv,t}\big) \overleftarrow\bv_t -2   s_t(\overleftarrow\ba_t)\end{pmatrix} \dd t + \begin{pmatrix} \bm{0} \\ \sqrt{\beta\gamma} \bI_{d}\end{pmatrix} \dd\overleftarrow\bw_t}_{\text{O}},
\end{align}
where each part yields an ``analytic'' form and the underlying Kolmogorov (Fokker-Planck) operators are denoted by $\mathcal{L}_{A}$, $\mathcal{L}_{B}$, and $\mathcal{L}_{O}$, respectively.

The stochastic discretization schemes for kinetic Langevin are well-studied and lead to different formulations. The main difference lies in the approximations of the Hamiltonian component \citep{leimkuhler2013rational}
\begin{itemize}
    \item the BAOAB method: $\Phi_{\text{BAOAB}}^{h}= \exp(\frac{h}{2}\mathcal{L}_{B}) \exp(\frac{h}{2}\mathcal{L}_{A}) \exp(h\mathcal{L}_{O}) \exp(\frac{h}{2}\mathcal{L}_{A})\exp(\frac{h}{2}\mathcal{L}_{B})$;
    \item the ABOBA method: $\Phi_{\text{ABOBA}}^{h}=\exp(\frac{h}{2}\mathcal{L}_{A}) \exp(\frac{h}{2}\mathcal{L}_{B})\exp(h\mathcal{L}_{O})\exp(\frac{h}{2}\mathcal{L}_{B})\exp(\frac{h}{2}\mathcal{L}_{A})$.
\end{itemize}

In particular, the ABOBA method based on the symmetric splitting scheme follows that:
\begin{align}\label{S2_aboba}
   \overleftarrow\bx_{(k-\frac{1}{2})h}&=\overleftarrow\bx_{nh} - \frac{h}{4}\beta \overleftarrow\bv_{nh} \notag \\
   \overleftarrow\bv_{(k-\frac{1}{2})h}&=\overleftarrow\bv_{nh} + \frac{h}{4}\beta (1-2\gamma \bA_{\bx,nh}) \notag \\
   \overleftarrow\bv_{(k-\frac{1}{2})h}&=\overleftarrow\bv_{(k-\frac{1}{2})h}+\frac{h}{2}\beta\gamma \big(1 - 2 \bA_{\bv,nh}\big) \overleftarrow\bv_{nh} +h\beta\gamma s_{nh}(\overleftarrow\ba_{nh})+\sqrt{\beta\gamma h} \bm{\overleftarrow\xi}_{nh}\notag \\
   \overleftarrow\bv_{(n-1)h}&=\overleftarrow\bv_{(k-\frac{1}{2})h} +\frac{h}{4}\beta (1-2\gamma \bA_{\bx,(k-\frac{1}{2})h})\notag \\
   \overleftarrow\bx_{(n-1)h}&=\overleftarrow\bx_{(k-\frac{1}{2})h} - \frac{h}{4}\beta \overleftarrow\bv_{(n-1)h}.\notag
\end{align}

The BAOAB method can derived similarly. The numerical study of the invariant measure is based on the Baker–Campbell–Hausdorff (BCH) expansion \citep{hairer2006geometric}. The symmetric splitting scheme is a second-order integrator and is known to yield an approximation error of $O(h^3)$ \citep{leimkuhler2013rational, Chen15, CLD}. In contrast, the Euler–Maruyama scheme has a weaker error of $O(h^2)$ \citep{Chen15, Sitan_22_sampling_is_easy}. Nonetheless, the empirical advantage of the symmetric splitting scheme mainly holds with a small learning rate \citep{leimkuhler2013rational} and may not necessarily reduce the number of function evaluations in practice. For the sake of convenience in theoretical analysis, we will focus on the Euler–Maruyama scheme similar to the analysis in \cite{Sitan_22_sampling_is_easy}.

\section{Convergence Theory}

We follow the methodology outlined in \cite{VSDM} and utilize stochastic approximation (SA) techniques \citep{RobbinsM1951} to evaluate the generation quality based on the adaptive momentum diffusion models. By employing simulated backward trajectories, we optimize the variational scores. Consequently, the optimized forward process becomes not only simulation-free but also more transport-efficient. The iterates are conducted alternatingly and eventually yield more accurate backward score functions.

\begin{algorithm*}
   \caption{The SA formulation of the variational Schr\"odinger diffusion models. We approximate $\nabla_{\bv} \log  \overrightarrow{\rho}_{t}^{(k)}$ through the parametrized score estimation $s^{(k+1)}_{t}$ at each stage $k$ and time $t$. }
   \label{alg:SA_algorithm}
\begin{algorithmic}
\REPEAT
   \STATE{\textbf{Simulation}: Draw approximate samples $(\overleftarrow\bx_{(n-1)h}^{(k+1)},
    \overleftarrow\bv_{(n-1)h}^{(k+1)})$ from the backward process \eqref{EM_scheme} with fixed $\bA_{\bx, nh}^{(k)}$ and $\bA_{\bv, nh}^{(k)})$, where $(\overleftarrow\bx_{(N-1)h}^{(k+1)},
    \overleftarrow\bv_{(N-1)h}^{(k+1)})\sim \mathrm{N}(\bm{0}, \bSigma^{(k)}_{(N-1)h|0})$ and $\bSigma^{(k)}_{(N-1)h|0}$ is defined in Eq.(6), $n\in \{1,2,\cdots, N-1\}$.}
   \STATE{\textbf{Optimization}: } Minimize the transport cost via the forward loss function \eqref{MDM_loss}:
   \begin{equation}\label{SA_iterates}
       \bA_{\ba, nh}^{(k+1)}=\bA_{\ba, nh}^{(k)}-\eta_{k+1} \nabla \overrightarrow{\mathcal{L}}_{nh}(\bA_{\ba, nh}^{(k)}; \overleftarrow\bx^{(k+1)}_{nh}),
   \end{equation}
   where $\eta_k$ denotes the step size and $n\in\{0, 1, \cdots, N-1\}$.
   \UNTIL{$k=k_{\max}$}
    \vskip -1 in
\end{algorithmic}
\end{algorithm*}

In our convergence study, we assume a single step of sampling and a single step of optimization and conduct the iterates in Eq.\eqref{SA_iterates} for the coupled score function $\bA_{\ba, t}$ instead of $\bA_{\bx, t}$ for theoretical convenience. However, this simplification is not required in practical applications to boost the performance.

The SA iterates \eqref{SA_iterates} can be viewed as a stochastic numerical scheme of an ODE system as follows
\begin{equation}
\label{mean_field_perturbed_ode}
    \dd \bA_{\ba, t} = \nabla \overrightarrow{\bL}_t(\bA_{\ba, t})\dd s,
\end{equation}
where $\nabla \overrightarrow{\bL}_t(\bA_{\ba, t})$ is the mean-field aggregated from random-field functions $\nabla \overrightarrow{\mathcal{L}}_t(\bA_{\ba, t}; \overleftarrow\ba^{(\cdot)}_t)$:
\begin{equation}
\begin{split}
\label{mean_field_perturbed}
\nabla  \overrightarrow{\bL}_t(\bA_{\ba, t})&=\int_{\MX} \nabla \overrightarrow{\mathcal{L}}_t(\bA_{\ba, t}; \overleftarrow\ba^{(\cdot)}_t)  \overleftarrow{\rho}_{t}(\dd\overleftarrow\ba^{(\cdot)}_t).
\end{split}
\end{equation}


We aim to find the solution of $\nabla \overrightarrow{\bL}_t(\bA_{\ba, t}^{\star})=\bm{0}$ through approximate samples $(\overleftarrow\bx_{(n-1)h}^{(k+1)}, \overleftarrow\bv_{(n-1)h}^{(k+1)})$ from the backward process \eqref{EM_scheme}. 

Next, we present the underlying assumptions for regularity conditions for the solution $\bA_{\ba, t}^{\star}$ and the neighborhood. 

\begin{assump}[Positive Definiteness]
\label{ass_pd}
Both $\bI_d - 2 \gamma \bA_{\bx, t}$ and $\bI_d - 2 \bA_{\bv, t} $ are symmetric and positive-definite. Morever, $\|\bD\|_{op}\lesssim O(1)$, where $\|\cdot\|_{op}$ denotes the operator norm of a matrix.
\end{assump}

\begin{assump}[Locally strong convexity]
\label{ass_local_state_space}
For any stable equilibrium $\bA_{\ba}^{\star}$ with $\nabla \overrightarrow{\bL}_t(\bA_{\ba}^{\star})= \bm{0}$, there exists a convex set $\bTheta$ s.t. $\bA_{\ba}^{\star}\in \bTheta\subset \mathcal{A}$ and $m\bI \preccurlyeq \frac{\partial^2 \overrightarrow{\bL}_t}{\partial \bA^2}(\bA)\preccurlyeq M \bI$ for $\forall \bA \in \bTheta$ and some fixed constants $M>m>0$.
\end{assump}

The following assumes the smoothness of the score functions with respect to the input $\bx, \by$ and variational scores $\bA_{\ba_1}, \bA_{\ba_2}$  and similar ones have been widely used in \cite{lee2022convergence, Sitan_22_sampling_is_easy, chen2023improved, VSDM}.

\begin{assump}[Smoothness]
\label{ass_smoothness}
There exists a fixed constant $L$ such that for any $t\in[0, T]$, $\bA_{\ba_1},\bA_{\ba_2}\in\mathcal{A}$ and $\bx, \by \in \mathcal{X}$, the score functions $\nabla_{\bv}\log  \overrightarrow{\rho}_{\ba_1, t}$ and $\nabla_{\bv}\log  \overrightarrow{\rho}_{\ba_2, t}$ w.r.t. $\bA_{\ba_1}$ and $\bA_{\ba_2}$ satisfy 
\begin{equation*}
\begin{split}
        \|\nabla_{\bv}\log  \overrightarrow{\rho}_{\ba_1, t}(\bx)-\nabla_{\bv}\log  \overrightarrow{\rho}_{\ba_2, t}(\by)\|_2 &\leq L \|\bx-\by\|_2 + L \|\bA_{\ba_1}-\bA_{\ba_2}\|, \\
\end{split}
\end{equation*}
where $\|\cdot\|_2$ is the Euclidean norm and $\|\cdot\|$ denotes the standard matrix norm.
\end{assump}

\begin{assump}[Bounded Second Moment]
\label{ass_moment_bound}
The second moment of $\rho_{\text{data}}$ is upper bounded by $\mathfrak{m}_2^2$.
\end{assump}

\begin{assump}[Estimation of Score Functions]
\label{ass_score_estimation}

For all $t\in[0, T]$, and any $\bA_{\ba}$, the estimation error of the score functions is upper bounded by $\epsilon_{\text{score}}^2$:
$$\E_{\overrightarrow{\rho}_{t}}[\|s_{t}- \nabla_{\bv} \log \overrightarrow{\rho}_{\ba, t}\|_2^2]\leq \epsilon_{\text{score}}^2.$$

\end{assump}

\paragraph{Proof Sketch} Similar to \cite{VSDM}, the understanding of the quality of the adaptively generated data hinges on the stochastic approximation framework and can be decomposed into three steps: 
\begin{itemize}
    \item \textbf{Fixed Generation Quality}: We first show that given a fixed $\bA_{\ba, t}^{(k)}$, the generated data is approximately close to the real data in Theorem \ref{theorem:quality_of_data};
    \item \textbf{Convergence of Variational Scores}: We next prove the convergence of $\bA_{\ba, t}^{(k)}$ to the optimal $\bA_{ \ba, t}^{\star}$ via stochastic approximation in Theorem \ref{theorem_L2_convergence};
    \item \textbf{Adaptive Generation Quality:} In the limit of infinite iterations, we show the generated data is close to the real data in distribution given the optimal $\bA_{ \ba, t}^{\star}$ in Theorem \ref{theorem_adaptive_sampling}.
\end{itemize}
In particular, the stochastic approximation part is standard and inherited from \cite{VSDM}. The major novelty lies in the extension of \textcolor{dark2blue}{single-variate kinetic} Langevin diffusion to \textcolor{dark2blue}{multi-variate kinetic} Langevin diffusion through a customized Lyapunov function in Eq.\eqref{Lyapunov_Function}. As such, the adaptive momentum diffusion differs from the vanilla CLD in that the transportation plans are optimized locally in particular tailored to the data, moreover, the training maintains the same efficiency as CLD due to the simulation-free property of forward processes.

\subsection{Fixed Generation Quality}
In this section, we first study the generation quality based on time-invariant $\bA_{\bx, t}:=\bA_{\bx}$ and $\bA_{\bv,t}:=\bA_{\bv}$ and discuss the extensions to time-variant cases. 
\begin{align}
\mathrm{d} \overrightarrow\ba_t&=-\frac{1}{2}\bD\beta \overrightarrow\ba_t \dd t + \bbg\dd \overrightarrow\bw_t \label{eq:ULD-time-invariant}\\
        \bD & = \begin{pmatrix}
        0 & -1 \\
        1 & \gamma 
        \end{pmatrix}\otimes \bI_d-2\gamma \begin{pmatrix}
            0 & 0\\ \bA_{\bx} & \bA_{\bv}
        \end{pmatrix}. \notag
\end{align}
We denote the distribution of $\overrightarrow \ba_t$ by $\overrightarrow\rho_t$ and its $\bx$ and $\bv$-marginal by $\overrightarrow\rho_{\bx, t}$ and $\overrightarrow\rho_{\bv, t}$ respectively. To ensure an exponential convergence of CLD in~\eqref{eq:ULD-time-invariant} to its invariant measure, we make the following assumption:

\begin{lemma}[Invariant Measure]\label{lem:invariant_measure}
If Assumption~\ref{ass_pd} holds, the invariant measure of~\eqref{eq:ULD-time-invariant} is given by
\begin{align*}
    \mu = \mathrm{N}\bigg(\bm{0}, \begin{pmatrix}
        \bB_1^{-1} & \bm{0} \\ \bm{0} & \bB_2^{-1}
    \end{pmatrix}\bigg).
\end{align*}
where $\bB_1 = (\bI_d - 2 \gamma \bA_{\bx})^T(\bI_d - 2 \bA_{\bv})$ and $\bB_2 = \bI_d - 2 \bA_{\bv} $
\end{lemma}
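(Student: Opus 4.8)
The plan is to verify directly that the claimed Gaussian $\mu = \mathrm{N}(\bm{0}, \bSigma_\infty)$ with $\bSigma_\infty = \mathrm{diag}(\bB_1^{-1}, \bB_2^{-1})$ is stationary for the linear SDE \eqref{eq:ULD-time-invariant}, i.e.\ that its covariance $\bSigma_\infty$ is a fixed point of the (time-invariant) Lyapunov equation \eqref{sigma_diffusion}. Since the drift is linear with zero offset, the stationary law is centered Gaussian and it suffices to solve $\bm{0} = -\tfrac12\beta(\bD\bSigma_\infty + \bSigma_\infty \bD^\intercal) + \beta\gamma\bJ_{2d}$, equivalently
\begin{align}
\bD\bSigma_\infty + \bSigma_\infty \bD^\intercal = 2\gamma \bJ_{2d}. \notag
\end{align}
So the first step is simply to plug in the block forms. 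Writing $\bD = \begin{pmatrix} \bm{0} & -\bI_d \\ \bI_d - 2\gamma\bA_{\bx} & \gamma\bI_d - 2\bA_{\bv}\end{pmatrix}$ and $\bSigma_\infty = \begin{pmatrix}\bB_1^{-1} & \bm{0} \\ \bm{0} & \bB_2^{-1}\end{pmatrix}$, I would compute the four blocks of $\bD\bSigma_\infty + \bSigma_\infty\bD^\intercal$ and check that the $(1,1)$ block vanishes, the $(1,2)$ and $(2,1)$ blocks vanish, and the $(2,2)$ block equals $2\gamma\bI_d$ (which is exactly $2\gamma\bJ_{2d}$ in the bottom-right corner). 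The $(1,1)$ block is $-\bB_2^{-1} - \bB_2^{-\intercal}$ coming from the off-diagonal $-\bI_d$ entries of $\bD$ hitting $\bSigma_\infty$ — wait, one must be careful: the $(1,1)$ block of $\bD\bSigma_\infty$ is $-\bI_d \cdot \bm{0} = \bm 0$ since $\bD$'s first block-row is $(\bm 0, -\bI_d)$ and $\bSigma_\infty$'s second block-column upper block is $\bm 0$; so that block is automatically zero. The $(1,2)$ block of $\bD\bSigma_\infty$ is $-\bI_d\bB_2^{-1} = -\bB_2^{-1}$, and the $(1,2)$ block of $\bSigma_\infty\bD^\intercal$ is $\bB_1^{-1}(\bI_d - 2\gamma\bA_{\bx})^\intercal$; these must cancel, which forces the identity $\bB_1^{-1}(\bI_d - 2\gamma\bA_{\bx})^\intercal = \bB_2^{-1}$, i.e.\ $\bB_1 = (\bI_d - 2\gamma\bA_{\bx})^\intercal \bB_2 = (\bI_d - 2\gamma\bA_{\bx})^\intercal(\bI_d - 2\bA_{\bv})$, matching the stated definition of $\bB_1$. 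Here I use that $\bI_d - 2\bA_{\bv}$ is symmetric (Assumption~\ref{ass_pd}), so $\bB_2^\intercal = \bB_2$. Finally the $(2,2)$ block: $\bD\bSigma_\infty$ contributes $(\bI_d - 2\gamma\bA_{\bx})\bB_1^{-1} + (\gamma\bI_d - 2\bA_{\bv})\bB_2^{-1}$ and $\bSigma_\infty\bD^\intercal$ contributes its transpose; using $(\bI_d - 2\gamma\bA_{\bx})\bB_1^{-1} = \bB_2^{-1}(\text{something})$... more cleanly, $(\bI_d-2\gamma\bA_{\bx})\bB_1^{-1} = (\bI_d-2\gamma\bA_{\bx})\big((\bI_d-2\gamma\bA_{\bx})^\intercal(\bI_d-2\bA_{\bv})\big)^{-1}$, and I want the symmetric part of this plus $(\gamma\bI_d - 2\bA_{\bv})\bB_2^{-1}$ to equal $\gamma\bI_d$. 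The term $(\gamma\bI_d-2\bA_{\bv})\bB_2^{-1} = \gamma\bB_2^{-1} - 2\bA_{\bv}\bB_2^{-1}$; note $2\bA_{\bv}\bB_2^{-1} = (\bI_d - \bB_2)\bB_2^{-1} = \bB_2^{-1} - \bI_d$, so this becomes $(\gamma - 1)\bB_2^{-1} + \bI_d$, whose symmetric part is itself (as $\bB_2^{-1}$ is symmetric). It remains to check that the symmetric part of $(\bI_d-2\gamma\bA_{\bx})\bB_1^{-1}$ equals $\gamma\bI_d - (\gamma-1)\bB_2^{-1} - \bI_d$ — at this point I would either push the block algebra through, or invoke the standard fact that the Lyapunov equation $\bD\bSigma + \bSigma\bD^\intercal = 2\gamma\bJ$ has a unique solution whenever $\bD$ is Hurwitz (which follows from Assumption~\ref{ass_pd} giving positive-definiteness of $\bB_1, \bB_2$), and then only verify the three easy blocks plus the trace/consistency of the fourth.

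Given the potential for messiness in the $(2,2)$ block, the cleaner route I would actually take is: (i) show $\bD$ is Hurwitz under Assumption~\ref{ass_pd}, so \eqref{sigma_diffusion} has a unique stationary covariance and hence a unique stationary law; (ii) exhibit $\bSigma_\infty = \mathrm{diag}(\bB_1^{-1}, \bB_2^{-1})$ as a solution by verifying the three structurally-forced blocks and using a symmetry/consistency argument for the last one, or alternatively by checking that $\mu$ is infinitesimally invariant, i.e.\ that the Fokker–Planck generator $\mathcal{L}^* \mu = 0$, which for a linear SDE and Gaussian $\mu$ reduces to the same matrix identity but can sometimes be organized more transparently via the detailed-balance-type decomposition into the Hamiltonian (antisymmetric) and Ornstein–Uhlenbeck (dissipative) parts as in \eqref{CLD_dyn_discrete_scheme}. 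The antisymmetric part preserves any Gaussian whose covariance commutes appropriately, and the dissipative $\bv$-block then pins down $\bB_2^{-1}$, after which consistency pins down $\bB_1^{-1}$.

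The main obstacle is purely computational: the cross-term cancellation in the $(1,2)$ block is what \emph{defines} the relationship between $\bB_1$, $\bB_2$, $\bA_{\bx}$, $\bA_{\bv}$, so that step is a clean one-liner, but the $(2,2)$ block requires carefully using the symmetry assumptions on $\bI_d - 2\gamma\bA_{\bx}$ and $\bI_d - 2\bA_{\bv}$ from Assumption~\ref{ass_pd} (note these matrices need not commute, so one cannot be cavalier with orderings) together with the identity $2\bA_{\bv} = \bI_d - \bB_2$. I expect no conceptual difficulty, only bookkeeping, and the Hurwitz-uniqueness fallback guarantees correctness even if the direct $(2,2)$ verification is deferred to the appendix.
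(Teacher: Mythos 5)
Your plan — verify the stationary covariance $\bSigma_\infty=\mathrm{diag}(\bB_1^{-1},\bB_2^{-1})$ by plugging it into the time-invariant Lyapunov equation $\bD\bSigma_\infty+\bSigma_\infty\bD^\intercal=2\gamma\bJ_{2d}$ and checking block by block — is sound and is a genuinely different (though equivalent) route from the paper's, which instead verifies infinitesimal invariance by substituting $\mu\propto\exp(-\tfrac12\bx^\intercal\bB_1\bx-\tfrac12\bv^\intercal\bB_2\bv)$ directly into the Fokker–Planck equation and showing the flux divergence vanishes. Both approaches reduce to the same algebraic identities; the Lyapunov route keeps everything at the level of block matrices, while the paper's Fokker–Planck route makes the detailed-balance structure (Hamiltonian transport orthogonal to $\nabla\ln\mu$, dissipative part canceling the diffusion) more visible.

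The one genuine error that derailed your $(2,2)$ block is a misread of $\bD$. From Eq.~\eqref{eq:ULD-time-invariant}, $\bD=\begin{pmatrix}0&-1\\1&\gamma\end{pmatrix}\otimes\bI_d-2\gamma\begin{pmatrix}\bm{0}&\bm{0}\\\bA_\bx&\bA_\bv\end{pmatrix}$, so the bottom-right block is $\gamma\bI_d-2\gamma\bA_\bv=\gamma\bB_2$, not $\gamma\bI_d-2\bA_\bv$ as you wrote (you dropped a factor of $\gamma$). With the corrected $\bD$, the $(2,2)$ block is not messy at all: $(\bD\bSigma_\infty)_{22}=\gamma\bB_2\,\bB_2^{-1}=\gamma\bI_d$ and $(\bSigma_\infty\bD^\intercal)_{22}=\bB_2^{-1}(\gamma\bB_2)^\intercal=\gamma\bI_d$ (using symmetry of $\bB_2$ from Assumption~\ref{ass_pd}), summing to $2\gamma\bI_d$ as required. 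The auxiliary identity $2\bA_\bv\bB_2^{-1}=\bB_2^{-1}-\bI_d$ and the ``symmetric-part'' bookkeeping you were setting up are therefore unnecessary; the whole verification is three one-liners once $\bD$ is written correctly, and the Hurwitz-uniqueness fallback you invoke is not needed for the computation, only for uniqueness of the stationary law (which the paper dispatches even more briefly by noting the SDE is linear). Your extraction of the $(1,2)$-block condition $\bB_1=(\bI_d-2\gamma\bA_\bx)^\intercal\bB_2$ is correct and exactly reproduces the stated definition of $\bB_1$.
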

\begin{proof}
Recall the Fokker Planck Equation gives us
\begin{align*}
    \partial_t\overrightarrow \rho_t(\ba) = \nabla\cdot\bigg(\overrightarrow \rho_t(\ba)\big(\frac12 \bD \beta \ba + \frac12 \bbg\bbg^T\nabla\ln \overrightarrow \rho_t(\ba)\big)\bigg),
\end{align*}
where $\overrightarrow \rho_t$ is the density of $\overrightarrow \ba_t$. Note that $\mu(\ba)=\mu(\bx, \bv) \propto \exp(-\frac{\bx^T \bB_1 \bx}{2} - \frac{\bv^T \bB_2 \bv}{2})$. Hence if $\overrightarrow\rho_t = \mu$, then
\begin{align*}
    \partial\overrightarrow \rho_t(\ba) & = \nabla\cdot\bigg(\mu(\ba)\bigg(\frac12 \bD \beta \ba + \frac12 \bbg\bbg^T\nabla\ln \mu(\ba)\bigg)\bigg) \\
    & = \bigg\langle \nabla\mu(\ba), \frac12 \bD \beta \ba + \frac12 \bbg\bbg^T\nabla\ln \mu(\ba)\bigg\rangle + \mu(\ba) \nabla\cdot\bigg(\frac12 \bD \beta \ba + \frac12 \bbg\bbg^T\nabla\ln \mu(\ba)\bigg)\\
    & = \bigg\langle\mu(\ba) \begin{pmatrix}
        -\bB_1\bx \\ -\bB_2 \bv
    \end{pmatrix}, \frac{\beta}{2} \begin{pmatrix}
        0 & -\bI_d \\ \bI_d - 2\gamma\bA_\bx & \gamma \bB_2
    \end{pmatrix}\begin{pmatrix}
        \bx \\ \bv
    \end{pmatrix} + \frac{\beta}{2}\begin{pmatrix}
        0 \\ \gamma \bB_2 \bv
    \end{pmatrix}\bigg\rangle + \frac{\beta}{2}\mu(\ba)(\mathrm{Tr}(\bD) - \mathrm{Tr}(\bB_2))\\
    & = \frac{\beta}{2}\mu(\ba)\bigg\langle \begin{pmatrix}
        -\bB_1\bx \\ -\bB_2 \bv
    \end{pmatrix}, \begin{pmatrix}
        -\bv \\ (\bI_d - 2\gamma\bA_\bx)\bx 
    \end{pmatrix}\bigg\rangle \\
    & = 0.
\end{align*}
Therefore $\mu$ is an invariant measure of~\eqref{eq:ULD-time-invariant}. This invariant measure is unique since~\eqref{eq:ULD-time-invariant} is a linear SDE.
\end{proof}
We denote the distribution of the numerical reverse process~\eqref{EM_scheme} by $\overleftarrow\rho_t$ and its $\bx$ and $\bv$-marginal distribution by $\overleftarrow\rho_{\bx, t}$ and $\overleftarrow\rho_{\bv, t}$ respectively. We aim to bound $\mathrm{TV}(\overleftarrow\rho_{\bx, 0}, p_{\text{data}})$. To achieve this, we first bound $\mathrm{TV}(\overleftarrow\rho_0, \overrightarrow{\rho}_0)$ and then apply the Data-Processing Inequality. We lay out three standard assumptions following \citet{Sitan_22_sampling_is_easy} to conduct our analysis.






\begin{theorem}[Fixed Generation Quality]\label{theorem:quality_of_data}
    Assume assumptions \ref{ass_pd}, \ref{ass_smoothness}, \ref{ass_moment_bound} and \ref{ass_score_estimation} hold. The generated data distribution is close to the data distributions $p_{\text{data}}$ such that
    \begin{equation*}
        \mathrm{TV}(\overleftarrow{\rho}_{\bx,0}, p_{\text{data}})\lesssim \underbrace{\sqrt{\mathrm{KL}(p_{\text{data}}\|\mu_\bx) + \mathrm{FI}(p_{\text{data}}\|\mu_\bx)} \exp(-T)}_{\text{convergence of forward process}} + \underbrace{(L\sqrt{dh} +  L\mathfrak{m}_2 h)\sqrt{T}}_{\text{ discretization error}} + \underbrace{\epsilon_{\text{score}}\sqrt{T}}_{\text{score estimation}},
    \end{equation*}

    where $\mu_\bx$ is the $\bx$-marginal distribution of $\mu$ defined in Lemma~\ref{lem:invariant_measure}.
\end{theorem}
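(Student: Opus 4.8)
The plan is to follow the standard three-part decomposition for diffusion-model sampling guarantees, adapted to the multivariate kinetic setting of \eqref{eq:ULD-time-invariant}. First I would invoke the Data-Processing Inequality to reduce $\mathrm{TV}(\overleftarrow{\rho}_{\bx,0}, p_{\text{data}})$ to $\mathrm{TV}(\overleftarrow{\rho}_{0}, \overrightarrow{\rho}_{0})$, since $\overrightarrow\rho_{\bx,0}=p_{\text{data}}$ and the $\bx$-marginal is a deterministic function of the joint law. Then I would split the joint-law error into the three advertised contributions by a triangle-type argument on a coupled reference process: (i) the error from initializing the reverse process at the Gaussian prior $\mu$ rather than at the true time-$T$ forward law $\overrightarrow\rho_T$; (ii) the discretization error of the Euler--Maruyama scheme \eqref{EM_scheme} relative to the exact continuous-time reverse SDE; and (iii) the score-estimation error coming from Assumption~\ref{ass_score_estimation}. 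Parts (ii) and (iii) are controlled by a Girsanov/Pinsker computation: the KL divergence between the path measures of the true reverse SDE and the approximate one is, up to constants, $\int_0^T \E\|s_t - \nabla_{\bv}\log\overrightarrow\rho_t\|_2^2\,\dd t$ plus the one-step drift-discretization terms, which under Assumptions~\ref{ass_smoothness} and \ref{ass_moment_bound} are of order $L^2 d h + L^2 \mathfrak{m}_2^2 h^2$ per step summed over $T/h$ steps; taking square roots via Pinsker gives the $(L\sqrt{dh}+L\mathfrak{m}_2 h)\sqrt{T}$ and $\epsilon_{\text{score}}\sqrt{T}$ terms. This part is essentially the kinetic-Langevin analysis of \citet{Sitan_22_sampling_is_easy}, so I would cite it rather than redo it.

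The genuinely new ingredient is part (i): bounding $\mathrm{TV}(\overrightarrow\rho_T, \mu)$, i.e., the exponential convergence of the \emph{multivariate} forward process \eqref{eq:ULD-time-invariant} to its invariant Gaussian $\mu$ identified in Lemma~\ref{lem:invariant_measure}. Here I would establish a hypocoercivity-type contraction. Because the drift matrix $\bD$ is not symmetric (the Hamiltonian block mixes $\bx$ and $\bv$), one cannot use the naive log-Sobolev or Poincaré inequality for the $\bx$-marginal directly; instead I would introduce the modified (twisted) entropy/Fisher-information functional associated with a carefully chosen positive-definite matrix $\bM$ — this is exactly the customized Lyapunov function alluded to as Eq.\eqref{Lyapunov_Function} in the proof sketch — and show $\frac{\dd}{\dd t}\mathcal{E}(\overrightarrow\rho_t) \le -c\,\mathcal{E}(\overrightarrow\rho_t)$ for a rate $c$ bounded below by a constant (using $\|\bD\|_{op}\lesssim O(1)$ and the positive-definiteness from Assumption~\ref{ass_pd} to control the cross terms). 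Integrating and specializing to $t=T$ then yields a bound of the form $\sqrt{\mathrm{KL}(p_{\text{data}}\otimes p_{\bv_0}\|\mu) + \mathrm{FI}(\cdot\|\mu)}\,e^{-cT}$; since $p_{\bv_0}=\mathrm{N}(0,\bI)$ and $\mu$ factorizes with $\bv$-marginal a nondegenerate Gaussian (and $\bB_1,\bB_2$ have bounded condition number under Assumption~\ref{ass_pd}), the $\bv$-part of the relative entropy and Fisher information is $O(1)$ and gets absorbed, leaving the stated $\sqrt{\mathrm{KL}(p_{\text{data}}\|\mu_\bx)+\mathrm{FI}(p_{\text{data}}\|\mu_\bx)}\exp(-T)$ (with the absolute constant $c$ folded into the $\lesssim$ and a rescaling of $\beta$ or $T$).

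Finally I would assemble the three pieces: $\mathrm{TV}(\overleftarrow\rho_{\bx,0},p_{\text{data}}) \le \mathrm{TV}(\overleftarrow\rho_0,\overrightarrow\rho_0) \le \mathrm{TV}(\mu,\overrightarrow\rho_T) + \sqrt{\tfrac12 \mathrm{KL}(\text{path measures})}$, then bound each summand as above and collect terms. The main obstacle I anticipate is constructing the correct twisted Lyapunov functional $\bM$ for the multivariate kinetic generator and verifying the differential inequality with a dimension-free rate when $\bA_{\bx},\bA_{\bv}$ are general (not scalar) matrices — the coupling between coordinates introduced by non-diagonal variational scores is precisely what distinguishes this from the CLD analysis, and checking that the positive-definiteness and operator-norm bounds in Assumption~\ref{ass_pd} suffice to close the estimate (rather than needing, say, commutativity of $\bA_{\bx}$ and $\bA_{\bv}$) is the delicate step. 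A secondary bookkeeping issue is making sure the Fisher-information term $\mathrm{FI}(p_{\text{data}}\|\mu_\bx)$ appears naturally: this comes from the fact that the velocity variable is initialized at its stationary marginal, so the standard ``warm-start in $\bv$'' trick of \citet{CLD} converts an a-priori KL bound into one also involving the Fisher information of the $\bx$-marginal, and I would need to reproduce that conversion in the twisted-norm setting.
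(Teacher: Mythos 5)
Your proposal follows essentially the same route as the paper's proof: Data-Processing to reduce to the joint law, a decomposition into forward-convergence, discretization, and score-estimation errors (the paper does this via the chain rule for KL rather than a TV triangle inequality, but these are interchangeable here), the hypocoercivity/twisted-Lyapunov argument of Eq.~\eqref{Lyapunov_Function} for the forward contraction (the paper cites Appendix~C of Ma et al.\ for this), and the Girsanov/Pinsker analysis of Chen et al.\ for the reverse process. The steps you flag as delicate --- closing the twisted-entropy differential inequality for non-scalar $\bA_\bx,\bA_\bv$ and tracking the Fisher-information term from the velocity warm start --- are exactly the points the paper defers to the cited references, so your account is accurate.
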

\begin{proof}
Following~\citet{chen2023improved}, we employ the chain rule for KL divergence and obtain:
\begin{align*}
\mathrm{KL}(\overrightarrow\rho_0\|\overleftarrow{\rho}_{0}) \leq \mathrm{KL}(\overrightarrow{\rho}_{T} \| \overleftarrow{\rho}_{T}) + \E_{\overrightarrow{\rho}_{T}(\ba)}[\mathrm{KL}(\overrightarrow\rho_{0|T}(\cdot\|\ba)| \overleftarrow{\rho}_{0|T}(\cdot\|\ba)],
\end{align*}
where $\overrightarrow\rho_{0|T}$ is the conditional distribution of $\ba_0$ given $\ba_T$ and likewise for $\overleftarrow{\rho}_{0|T}$. Note that the two terms correspond to the convergence of the forward and reverse process respectively. We proceed to prove that 
\begin{align*}
    & \text{Part I: Forward process}\quad\quad \mathrm{KL}(\overrightarrow{\rho}_{T} \| \overleftarrow{\rho}_{T})
    \lesssim (\mathrm{KL}(p_{\text{data}}\|\mu_\bx) + \mathrm{FI}(p_{\text{data}}\|\mu_\bx)) e^{-T},\\
     & \text{Part II: Backward process}\quad \E_{\overrightarrow{\rho}_{T}(\bx)}[\mathrm{KL}(\overrightarrow\rho_{0|T}(\cdot|\bx)\| \overleftarrow{\rho}_{0|T}(\cdot|\bx)] \lesssim (L^2dh + L^2 m_2^2 h^2)T + \epsilon_{\text{score}}^2 T.
\end{align*}

\text{Part I:} Note that $\overleftarrow \rho_T = \mu$, where $\mu$ is the invariant measure of~\eqref{eq:ULD-time-invariant}. Following~\cite{ma2021there}, we construct the Lyapunov Function
\begin{align}\label{Lyapunov_Function}
    \mathrm{L}(\overrightarrow\rho_t) := \mathrm{KL}(\overrightarrow\rho_t|\mu) + \E_{\overrightarrow\rho_t}\bigg[\bigg\langle \nabla\ln\frac{\overrightarrow\rho_t}{\mu}, S \nabla\ln\frac{\overrightarrow\rho_t}{\mu} \bigg\rangle\bigg]
\end{align}
for some positive definite matrix $S$. Since the Gaussian distribution $\mu$ satisfies the log-Sobolev inequality, one can show that there exists a constant $c > 0$ such that $\frac{d}{dt} \mathrm{L}(\overrightarrow{\rho}_t) \leq -c \mathrm{L}(\overrightarrow{\rho}_t)$. Here $c$ depends on $\beta$, $\gamma$ and the log-sobolev constant of $\mu$. Thus,
\begin{align*}
    \text{KL}(\overrightarrow\rho_T|\mu)\leq \mathrm{L}(\overrightarrow\rho_T) \leq \mathrm{L}(\overrightarrow\rho_0)e^{-cT} \lesssim (\mathrm{KL}(p_{\text{data}}\|\mu_\bx) + \mathrm{FI}(p_{\text{data}}\|\mu_\bx)) e^{-T}.
\end{align*}
For the detailed proof, we refer readers to Appendix C of~\cite{ma2021there}, as the argument closely follows similar reasoning. For brevity, we omit it here.

\text{Part II:} The proof for the convergence of the reverse process is essentially identical to Theorem 15 of~\citet{Sitan_22_sampling_is_easy}, with the only potential replacements being instances of $\bigg\|\begin{pmatrix}
    0 & \bI_d\\ \bI_d & \gamma \bI_d
\end{pmatrix}\bigg\|_{op}$ with $\|\bD\|_{op}$. However, they are equivalent due to Assumption~\ref{ass_pd}. Therefore, we omit the proof here.

Combining the results from Parts I and II, and applying Pinsker's Inequality, we obtain
\begin{align*}
    \mathrm{TV}(\overleftarrow{\rho}_{0}, \overrightarrow{\rho}_{0})\lesssim \sqrt{\mathrm{KL}(p_{\text{data}}\|\mu_\bx) + \mathrm{FI}(p_{\text{data}}\|\mu_\bx)} \exp(-T) + (L\sqrt{dh} +  L\mathfrak{m}_2 h)\sqrt{T} + \epsilon_{\text{score}}\sqrt{T}.
\end{align*}
The final result follows by applying the Data-Processing Inequality to transition from $\overrightarrow{\rho}_0$ and $\overleftarrow{\rho}_0$ to their respective $\bx$-marginals.
\end{proof}
\begin{remark}
The proof could be potentially generalized to the case where $\bA_{\bx, t}$ and $\bA_{\bv, t}$ are time-varying. First, for the backward process, note that the proof of Theorem 15 in~\citet{Sitan_22_sampling_is_easy} relies only on the score estimation and the Lipschitz property of the score function, which does not require the drift of the forward process to be time-invariant. Second, for the forward process, for a fixed $T > 0$, one can always consider a modified version of~\eqref{FB-SDE-linear-unified} with time-averaged drift given by~\eqref{eq:ULD-time-invariant}, where $\bD = \frac{1}{T} \int_0^T \bD_t , \mathrm{d}t$. Thus, for the same initial condition $\ba_0$,~\eqref{FB-SDE-linear-unified} and~\eqref{eq:ULD-time-invariant} will generate the same distribution at time $T$.
\end{remark}

\subsection{Convergence of Variational Scores}
$\bA_{\ba, t}^{(k)}$ tracks a mean-field ODE, which converges to the equilibrium $\bA_{\ba, t}^{\star}$ if we can establish the stability condition of the mean-field ODE. As such, we can ensure that the perturbations caused by errors in Theorem \ref{theorem:quality_of_data} result in, at most, similar variations in subsequent iterates.

The following is a restatement of Lemma 2 in \cite{VSDM}

\begin{lemma}[Local stabiltity]\label{lemma_local_stability}
    Given assumptions \ref{ass_local_state_space} and \ref{ass_smoothness}, we can identify a local stability condition for any $\bA\in \bTheta$ such that
        \begin{equation*}
        \label{local_stability}
            \langle \bA-\bA_{\ba, t}^{\star}, \nabla \overrightarrow{\bL}_t(\bA) \rangle \geq m \|\bA-\bA_{\ba, t}^{\star}\|_2^2. 
        \end{equation*}
\end{lemma}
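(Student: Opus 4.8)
The plan is to obtain the inequality as the standard strong-monotonicity property of the gradient of an $m$-strongly convex function, combined with the fact that $\bA_{\ba,t}^{\star}$ is a stationary point; this mirrors Lemma~2 of \cite{VSDM}.

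First I would fix $\bA\in\bTheta$ and consider the segment $\bA_s := \bA_{\ba,t}^{\star} + s(\bA-\bA_{\ba,t}^{\star})$ for $s\in[0,1]$. Since $\bTheta$ is convex (Assumption~\ref{ass_local_state_space}) and contains both endpoints, the entire segment lies in $\bTheta$, so the Hessian bound $m\bI\preccurlyeq \frac{\partial^2 \overrightarrow{\bL}_t}{\partial\bA^2}(\bA_s)\preccurlyeq M\bI$ holds along it. Here Assumption~\ref{ass_smoothness} is what guarantees that $\overrightarrow{\bL}_t$, defined as the mean-field average \eqref{mean_field_perturbed} of the random-field gradients $\nabla\overrightarrow{\mathcal{L}}_t$, is well-defined and regular enough (in particular $C^2$ on $\bTheta$) for the manipulations below.

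Next I would write the gradient difference as a Hessian integral along this segment,
\[
\nabla\overrightarrow{\bL}_t(\bA) - \nabla\overrightarrow{\bL}_t(\bA_{\ba,t}^{\star}) = \left(\int_0^1 \frac{\partial^2 \overrightarrow{\bL}_t}{\partial\bA^2}(\bA_s)\,\dd s\right)(\bA - \bA_{\ba,t}^{\star}),
\]
pair both sides with $\bA-\bA_{\ba,t}^{\star}$, and use the lower Hessian bound to get
\[
\langle \bA-\bA_{\ba,t}^{\star},\ \nabla\overrightarrow{\bL}_t(\bA) - \nabla\overrightarrow{\bL}_t(\bA_{\ba,t}^{\star})\rangle \;\geq\; m\,\|\bA-\bA_{\ba,t}^{\star}\|_2^2.
\]
Finally, since $\bA_{\ba,t}^{\star}$ is a stable equilibrium with $\nabla\overrightarrow{\bL}_t(\bA_{\ba,t}^{\star})=\bm{0}$ by Assumption~\ref{ass_local_state_space}, the left-hand side reduces to $\langle \bA-\bA_{\ba,t}^{\star}, \nabla\overrightarrow{\bL}_t(\bA)\rangle$, which is exactly the claim.

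There is essentially no hard step here; the only points requiring care are (i) verifying that the segment joining $\bA$ to $\bA_{\ba,t}^{\star}$ stays inside $\bTheta$ so the Hessian bounds apply pointwise along it, and (ii) justifying the Taylor-with-integral-remainder representation of the gradient difference, which needs $\overrightarrow{\bL}_t\in C^2(\bTheta)$ — both supplied by Assumptions~\ref{ass_local_state_space} and~\ref{ass_smoothness}. As an alternative that avoids invoking twice-differentiability, one can instead use the first-order characterization of $m$-strong convexity, namely $\overrightarrow{\bL}_t(\bA')\geq \overrightarrow{\bL}_t(\bA) + \langle\nabla\overrightarrow{\bL}_t(\bA),\,\bA'-\bA\rangle + \tfrac{m}{2}\|\bA'-\bA\|_2^2$, apply it with $(\bA,\bA')=(\bA,\bA_{\ba,t}^{\star})$ and with the roles swapped, add the two inequalities, and again use $\nabla\overrightarrow{\bL}_t(\bA_{\ba,t}^{\star})=\bm{0}$ to conclude.
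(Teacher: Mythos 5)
Your proof is correct and is the standard argument: the paper itself does not reprint a proof (it states the lemma as a restatement of Lemma~2 in \cite{VSDM}), but the intended argument is precisely the strong-monotonicity consequence of the Hessian lower bound on the convex set $\bTheta$ combined with the stationarity $\nabla\overrightarrow{\bL}_t(\bA_{\ba,t}^\star)=\bm{0}$, which is what you carry out via the integral-of-Hessian representation along the segment (and, equivalently, in your alternative via the first-order characterization of strong convexity).
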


Next, we assume the step size $\eta_k$ follows the tradition in stochastic approximation \citep{Albert90}.
\begin{assump}[Step size]
\label{ass_step_size}
\begin{equation*} \label{a1}
0<\eta_{k+1}<\eta_k, \ \ \sum_{k=1}^{\infty} \eta_{k}=+\infty,\ \  \sum_{k=1}^{\infty} \eta_{k}^{2\alpha}<\infty, \ \ \alpha \in \big(\frac{1}{2}, 1\big].
\end{equation*}
 \end{assump}

The next result is a restatement of Theorem 2 in \cite{VSDM} to prove the convergence of the variational scores.
\begin{theorem}[Convergence in $L^2$]\label{theorem_L2_convergence}
    Given assumptions \ref{ass_local_state_space} - \ref{ass_step_size} and a large enough $k$, the variational score $\bA_{\ba, t}^{(k)}$ in algorithm \ref{alg:SA_algorithm} converges to a local equilibrium $\bA_{\ba, t}^{\star}$ that motivates efficient transport such that
    \begin{equation*}
    \E_{\overleftarrow{\rho}_{\ba, t}^{(k)}}[\|\bA_{\ba, t}^{(k)}-\bA_{\ba, t}^{\star}\|_2^2]\leq 2 \eta_{k}.
\end{equation*}
\end{theorem}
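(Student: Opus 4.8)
The plan is to recognize the update \eqref{SA_iterates} as a Robbins--Monro stochastic approximation scheme targeting the root $\bA_{\ba,t}^{\star}$ of the mean field $\nabla\overrightarrow{\bL}_t$, and to run the classical $L^2$ analysis driven by the local stability estimate of Lemma \ref{lemma_local_stability}. Since the theorem is explicitly a restatement of Theorem 2 in \cite{VSDM}, the job is to verify that the hypotheses needed for that argument are in force here. First I would decompose the stochastic gradient into its mean-field part and a noise part,
\begin{equation*}
\nabla\overrightarrow{\mathcal{L}}_{nh}(\bA_{\ba,nh}^{(k)};\overleftarrow\bx_{nh}^{(k+1)})
= \nabla\overrightarrow{\bL}_{nh}(\bA_{\ba,nh}^{(k)}) + \boldsymbol\xi_{k+1},
\end{equation*}
where, by the definition \eqref{mean_field_perturbed}, $\boldsymbol\xi_{k+1}$ has conditional mean equal to the bias incurred because the samples $\overleftarrow\bx_{nh}^{(k+1)}$ are drawn from the \emph{approximate} backward law $\overleftarrow\rho_t^{(k)}$ rather than the exact forward marginal. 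Theorem \ref{theorem:quality_of_data} (together with the smoothness Assumption \ref{ass_smoothness}, which transfers the TV/moment control of the samples into control of the gradient) bounds this bias, so the scheme is a stochastic approximation with vanishing, summable-in-an-appropriate-sense perturbation.

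The core of the argument is the standard one-step contraction. Writing $\Delta_k := \bA_{\ba,t}^{(k)} - \bA_{\ba,t}^{\star}$, expand
\begin{equation*}
\|\Delta_{k+1}\|_2^2 = \|\Delta_k\|_2^2 - 2\eta_{k+1}\langle \Delta_k, \nabla\overrightarrow{\bL}_t(\bA_{\ba,t}^{(k)})\rangle - 2\eta_{k+1}\langle\Delta_k,\boldsymbol\xi_{k+1}\rangle + \eta_{k+1}^2 \|\nabla\overrightarrow{\mathcal{L}}_t\|_2^2.
\end{equation*}
Take expectations: the cross term with $\nabla\overrightarrow{\bL}_t$ is bounded below by $2m\eta_{k+1}\E\|\Delta_k\|_2^2$ via Lemma \ref{lemma_local_stability}; the $\boldsymbol\xi_{k+1}$ term is handled by splitting into its zero-mean fluctuation (controlled after conditioning, contributing an $O(\eta_{k+1}^2)$ variance term) and its bias (controlled by Theorem \ref{theorem:quality_of_data}, and absorbed using Young's inequality against $m\eta_{k+1}\E\|\Delta_k\|_2^2$); the last term is $O(\eta_{k+1}^2)$ using Assumption \ref{ass_pd} (operator-norm bound on $\bD$), Assumption \ref{ass_moment_bound}, and Assumption \ref{ass_smoothness} to bound the gradient magnitude on $\bTheta$. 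This yields a recursion of the form $u_{k+1}\le (1-m\eta_{k+1})u_k + C\eta_{k+1}^2$ with $u_k := \E_{\overleftarrow\rho_{\ba,t}^{(k)}}\|\Delta_k\|_2^2$. Unrolling this recursion with the step-size conditions of Assumption \ref{ass_step_size} ($\eta_k$ decreasing, $\sum\eta_k=\infty$, $\sum\eta_k^{2\alpha}<\infty$ with $\alpha\in(\tfrac12,1]$) gives $u_k \lesssim \eta_k$ for $k$ large, and a careful tracking of the constants (as in \cite{VSDM}) tightens this to $u_k \le 2\eta_k$; one also needs a confinement argument showing the iterates stay in $\bTheta$ with high probability once inside, which again follows the template of \cite{VSDM} using local strong convexity (Assumption \ref{ass_local_state_space}).

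The main obstacle I expect is not the stochastic-approximation recursion itself --- that is genuinely inherited verbatim from \cite{VSDM} --- but rather the verification that the error terms from Theorem \ref{theorem:quality_of_data} enter the noise decomposition in the right way: the theorem controls $\mathrm{TV}(\overleftarrow\rho_{\bx,0},p_{\text{data}})$ at the \emph{terminal} time of the backward process, whereas the SA update at \eqref{SA_iterates} uses samples $\overleftarrow\bx_{nh}^{(k+1)}$ at \emph{all} intermediate times $nh$, and one must argue that the intermediate marginals are likewise close (this is implicit in the coupling used in the Part II estimate) and that the score Lipschitzness in Assumption \ref{ass_smoothness} then converts this closeness into a bound on $\|\E[\boldsymbol\xi_{k+1}\mid\mathcal F_k]\|$. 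Once that bridge is in place, everything else is routine bookkeeping of constants.
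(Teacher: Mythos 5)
The paper gives essentially no proof of this theorem: it is explicitly labeled ``a restatement of Theorem 2 in \cite{VSDM}'' and the proof is deferred entirely to that reference. Your reconstruction of the Robbins--Monro argument --- decompose the stochastic gradient into a mean-field term plus noise, use the local stability bound $\langle \Delta_k, \nabla\overrightarrow{\bL}_t(\bA)\rangle \ge m\|\Delta_k\|_2^2$ from Lemma~\ref{lemma_local_stability}, take conditional expectations, and unroll the recursion $u_{k+1}\le(1-m\eta_{k+1})u_k + C\eta_{k+1}^2$ under Assumption~\ref{ass_step_size} --- is the standard template and matches what the cited VSDM proof does. In that sense you have taken essentially the same route.

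The caveat you flag in your last paragraph is a real and worthwhile one, and the paper does not address it: Theorem~\ref{theorem:quality_of_data} controls $\mathrm{TV}(\overleftarrow\rho_{\bx,0},p_{\text{data}})$ only at the terminal $t=0$, whereas the SA iterate~\eqref{SA_iterates} is driven by samples $\overleftarrow\bx^{(k+1)}_{nh}$ at \emph{every} time slice, so the gradient bias $\E[\boldsymbol\xi_{k+1}\mid\mathcal F_k]$ must be controlled uniformly in $n$. In the Girsanov/chain-rule argument of Part~II the per-time-step discrepancies are indeed bounded before being summed, so the intermediate marginals inherit the same control; but you are right that this must be made explicit for the noise decomposition to close, and the paper leaves it implicit by pointing to VSDM. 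Two smaller points: (i) the precise constant $2\eta_k$ requires the usual induction on $k$ large enough so that $m\eta_k \le 1$ and the $C\eta_k^2$ term is absorbed --- worth stating, since ``$u_k\lesssim\eta_k$'' alone does not give the advertised constant; (ii) the confinement of $\bA_{\ba,t}^{(k)}$ in $\bTheta$ is assumed in VSDM rather than proved from scratch, so your ``high-probability confinement argument'' is somewhat stronger than what the reference actually delivers, and you could instead just assume the iterate remains in $\bTheta$ as Assumption~\ref{ass_local_state_space} implicitly does. None of these points is a gap in your reasoning --- they are places where the paper itself relies on the reader consulting VSDM.
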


\subsection{Adaptive Generation Quality}

Theorem \ref{theorem_L2_convergence} shows that the non-optimized $\bA_{\ba, t}^{(k)}$ converges to the equilibrium $\bA_{\ba, t}^{\star}$, where the latter yields efficient transportation plans. Combining the study of the sample quality in Theorem \ref{theorem:quality_of_data} based on a fixed $\bA_{\ba, t}^{(k)}$, we can evaluate the adaptive sample quality based on $\bA_{\ba, t}^{\star}$, which yields more and more efficient transportation plans in the long-time limit. The following is a natural extension of Theorem 3 in \cite{VSDM} since both algorithms follow from the framework of multivariate diffusion:
\begin{theorem}
\label{theorem_adaptive_sampling}
Assume assumptions \ref{ass_pd}-\ref{ass_step_size} hold. The  adaptively generated sample at stage $k$ based on the equilibrium $\bA_t^{\star}$ with efficient transportation plans is close in total variation (TV) distance to the real sample such that
\begin{equation*}
    \mathrm{TV}(\overleftarrow{\rho}^{\star}_{0,\bx}, \rho_{\text{data}})\lesssim \underbrace{\sqrt{\mathrm{KL}(\rho_{\text{data}}\|\mu^\bx) + \mathrm{FI}(\rho_{\text{data}}\|\mu^\bx)} \exp(-T)}_{\text{convergence of forward process}} + \underbrace{(L\sqrt{dh} +  L\mathfrak{m}_2 h)\sqrt{T}}_{\text{ discretization error}} + \underbrace{(\epsilon_{\text{score}} +\sqrt{\eta_k})\sqrt{T}}_{\text{adaptive score estimation}}.
\end{equation*}
\end{theorem}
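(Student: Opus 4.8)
The plan is to obtain the three advertised terms by combining the fixed-score guarantee of Theorem~\ref{theorem:quality_of_data} with the $L^2$ convergence of the variational scores in Theorem~\ref{theorem_L2_convergence}, viewing the stage-$k$ reverse process as a perturbation of the idealized reverse process driven by the equilibrium $\bA_{\ba,t}^{\star}$.

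First I would apply Theorem~\ref{theorem:quality_of_data} with $\bA_{\ba,t}=\bA_{\ba,t}^{\star}$. By Assumption~\ref{ass_local_state_space} the equilibrium lies in the admissible set $\bTheta$, so Assumption~\ref{ass_pd} holds there and Lemma~\ref{lem:invariant_measure} produces a well-defined Gaussian invariant measure $\mu^{\star}$ with $\bx$-marginal $\mu^{\bx}$. Theorem~\ref{theorem:quality_of_data} then yields exactly the forward-convergence term $\sqrt{\mathrm{KL}(\rho_{\text{data}}\|\mu^{\bx})+\mathrm{FI}(\rho_{\text{data}}\|\mu^{\bx})}\exp(-T)$ and the discretization term $(L\sqrt{dh}+L\mathfrak{m}_2 h)\sqrt{T}$, plus a score-estimation term $\epsilon_{\text{score}}^{\star}\sqrt{T}$, where $\epsilon_{\text{score}}^{\star}$ measures the error of the trained network $s_t^{(k+1)}$ against the true score $\nabla_{\bv}\log\overrightarrow{\rho}_{\ba^{\star},t}$ of the $\bA^{\star}$-forward process.

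Second I would control $\epsilon_{\text{score}}^{\star}$. The network $s_t^{(k+1)}$ is fitted to the forward score with the current $\bA_{\ba,t}^{(k)}$, so Assumption~\ref{ass_score_estimation} bounds $\E_{\overrightarrow{\rho}_t^{(k)}}\|s_t^{(k+1)}-\nabla_{\bv}\log\overrightarrow{\rho}_{\ba^{(k)},t}\|_2^2\le\epsilon_{\text{score}}^2$. Inserting $\pm\nabla_{\bv}\log\overrightarrow{\rho}_{\ba^{(k)},t}$ into $\|s_t^{(k+1)}-\nabla_{\bv}\log\overrightarrow{\rho}_{\ba^{\star},t}\|_2$ and applying the $\bA$-Lipschitz bound from Assumption~\ref{ass_smoothness}, $\|\nabla_{\bv}\log\overrightarrow{\rho}_{\ba^{(k)},t}-\nabla_{\bv}\log\overrightarrow{\rho}_{\ba^{\star},t}\|_2\le L\|\bA_{\ba,t}^{(k)}-\bA_{\ba,t}^{\star}\|$ (absorbing the change of measure between $\overrightarrow{\rho}_t^{(k)}$ and $\overrightarrow{\rho}_t^{\star}$, itself $O(\|\bA_{\ba,t}^{(k)}-\bA_{\ba,t}^{\star}\|)$-close by the same perturbation estimate), gives $\epsilon_{\text{score}}^{\star}\lesssim\epsilon_{\text{score}}+L\|\bA_{\ba,t}^{(k)}-\bA_{\ba,t}^{\star}\|$. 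Taking expectation over $\overleftarrow{\rho}_{\ba,t}^{(k)}$, using Jensen's inequality, and invoking Theorem~\ref{theorem_L2_convergence} to get $\E\|\bA_{\ba,t}^{(k)}-\bA_{\ba,t}^{\star}\|\le\sqrt{\E\|\bA_{\ba,t}^{(k)}-\bA_{\ba,t}^{\star}\|_2^2}\le\sqrt{2\eta_k}$, the score-estimation contribution collapses to $(\epsilon_{\text{score}}+\sqrt{\eta_k})\sqrt{T}$ up to constants. The analogous mismatches in the reverse-drift coefficient ($\bD_t^{(k)}$ versus $\bD_t^{\star}$, bounded in operator norm by $O(\|\bA_{\ba,t}^{(k)}-\bA_{\ba,t}^{\star}\|)$ via Assumption~\ref{ass_pd}) and in the reverse initialization ($\mathrm{N}(\bm{0},\bSigma^{(k)}_{(N-1)h|0})$ versus $\mathrm{N}(\bm{0},\bSigma^{\star}_{(N-1)h|0})$, close by continuity of the differential Lyapunov solution in $\bD$) feed into the KL chain-rule / Girsanov estimate used in Part~II of the proof of Theorem~\ref{theorem:quality_of_data} and are absorbed into the same $O(\sqrt{\eta_k})\sqrt{T}$ order. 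A final triangle inequality for the TV distance assembles the three stated terms.

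The main obstacle is precisely this bookkeeping: propagating the perturbation $\|\bA_{\ba,t}^{(k)}-\bA_{\ba,t}^{\star}\|$ through every place $\bA$ enters---the target score, the forward invariant measure $\mu$ and hence the $\mathrm{KL}$/$\mathrm{FI}$ constants, the Lyapunov contraction rate $c(\beta,\gamma,\mu)$ from Part~I, and the reverse-SDE drift and initialization---and checking that each dependence is at worst (locally) Lipschitz on the compact set $\bTheta$, which follows from Assumptions~\ref{ass_pd}--\ref{ass_smoothness}, so that everything reduces to order $\sqrt{\eta_k}\sqrt{T}$. Once that continuity is in hand, the stage-$k$ error equals the $\bA^{\star}$ error plus a remainder controlled by $\|\bA_{\ba,t}^{(k)}-\bA_{\ba,t}^{\star}\|$, and Theorem~\ref{theorem_L2_convergence} closes the argument in expectation; as the excerpt notes, the stochastic-approximation machinery is inherited from \cite{VSDM}, so beyond supplying Theorem~\ref{theorem:quality_of_data} for the multivariate kinetic diffusion the present theorem is a routine assembly.
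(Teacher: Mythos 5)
Your proposal is essentially the argument the paper has in mind: instantiate Theorem~\ref{theorem:quality_of_data} at $\bA^{\star}$, then absorb the stage-$k$ mismatch into the score-estimation term by combining the Lipschitz-in-$\bA$ bound of Assumption~\ref{ass_smoothness} with the $L^2$ rate $\E\|\bA^{(k)}-\bA^{\star}\|_2^2\le 2\eta_k$ from Theorem~\ref{theorem_L2_convergence}, yielding the extra $\sqrt{\eta_k}\sqrt{T}$ alongside $\epsilon_{\text{score}}\sqrt{T}$ exactly as in Theorem~3 of \cite{VSDM}. The paper itself provides no written proof beyond that one-line pointer, so your more explicit reconstruction---including the honest bookkeeping that the perturbation $\|\bA^{(k)}-\bA^{\star}\|$ also enters the reverse drift, the reverse initialization, the invariant measure $\mu$, and the Lyapunov contraction rate, each of which must be (and is, on the compact set $\bTheta$) locally Lipschitz in $\bA$---is a faithful and in fact more detailed version of the intended argument.
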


\section{Experimental Details}
We present more details on the images experiments. We consider the same U-net architecture as that used in \cite{SGMS_beat_GAN} and implemented by \cite{EDM}. We normalize the images to the interval [-1,1] and use a horizontal flip as only data augmentation. We present a table of hyperparameters used during training (\ref{hyperparmeters}):
\begin{table}[h]
\centering
\caption{Table of hyperparameters used during training}
\label{hyperparmeters}
\begin{tabular}{l c}
\hline
Parameter        & Value  \\ \hline
Forward Score learning rate    & 3e-4   \\
Backward Score learning rate    & 3e-6   \\
EMA Beta    & $.9999$   \\
Sampling Time Steps    & $125$    \\ 
Batch Size       & $256$    \\
Damping parameter & $.9$
\end{tabular}
\end{table}

Our method results in the following table of FID values (\ref{fid}):

\begin{table}[h]
\centering
\caption{CIFAR10 evaluation using sample quality (FID)}
\label{fid}
\begin{tabular}{l l c}
\hline
\textbf{Class} & \textbf{Method} & \textbf{FID} $\downarrow$ \\ \hline
\multirow{6}{*}{OT} & VSCLD (Ours) & 2.89 \\ 
                    & VSDM (\cite{VSDM}) & 2.28 \\
                    & SB-FBSDE (\cite{forward_backward_SDE}) & 3.01 \\
                    & DOT (\cite{Tanaka2019DiscriminatorOT}) & 15.78 \\
                    & DGflow (\cite{Ansari2020RefiningDG}) & 9.63 \\ \hline
\multirow{4}{*}{SGMs} & SDE (\cite{score_sde}) & 2.92 \\
                      & CLD (\cite{CLD}) & \textbf{2.23} \\
                      & VDM (\cite{Kingma2021VariationalDM}) & 4.00 \\
                      & LSGM (\cite{vahdat2021score}) & 2.10 \\
                      & EDM (\cite{EDM}) & \textbf{1.97} \\ \hline
\end{tabular}
\end{table}
Despite that we don't reach the best FID values among the compared methods, this could be due to the lack of advanced preconditioning and data augmentation techniques like those presented in \cite{EDM}. A more detailed investigation on the best practices for training variational diffusion models would allow improvement on this end, we delay this detailed investigation for future work. However we must emphasize that this experiment demonstrates the scalability of the method in high dimensions.

\section{Time Series Forecasts}
In this section, we present more forecasts generated using different methods

\subsection{Samples for VSCLD}

\begin{figure}[H]
  \includegraphics[width=\linewidth]{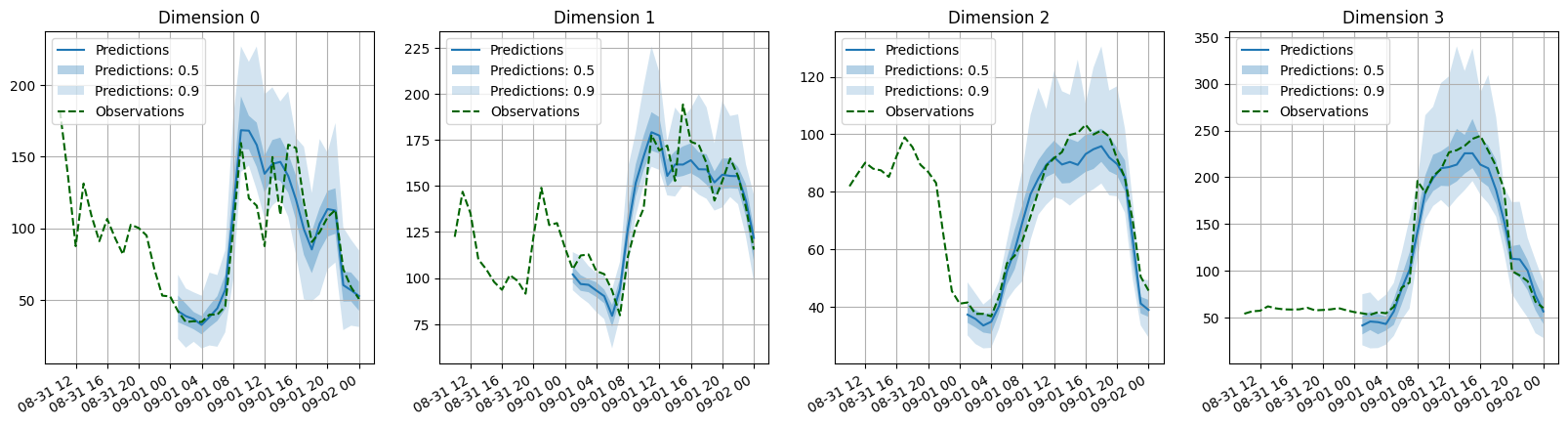}
  \caption{Sample forecasts of VSCLD in the electricity dataset} 
\end{figure}

\begin{figure}[H]
  \includegraphics[width=\linewidth]{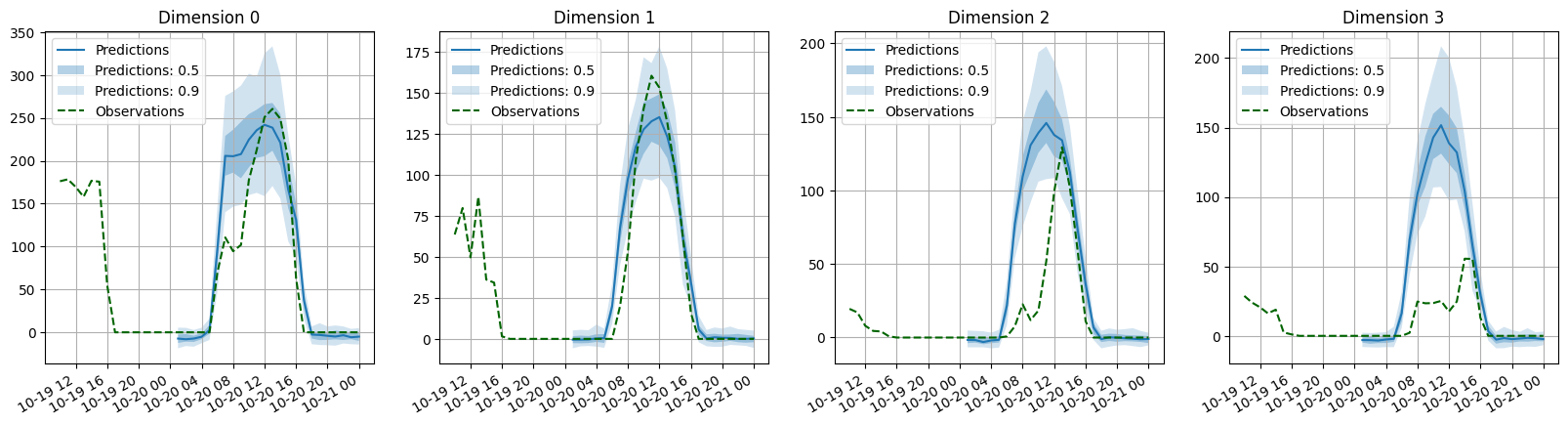}
  \caption{Sample forecasts of VSCLD in the solar dataset} 
\end{figure}

\begin{figure}[H]
  \includegraphics[width=\linewidth]{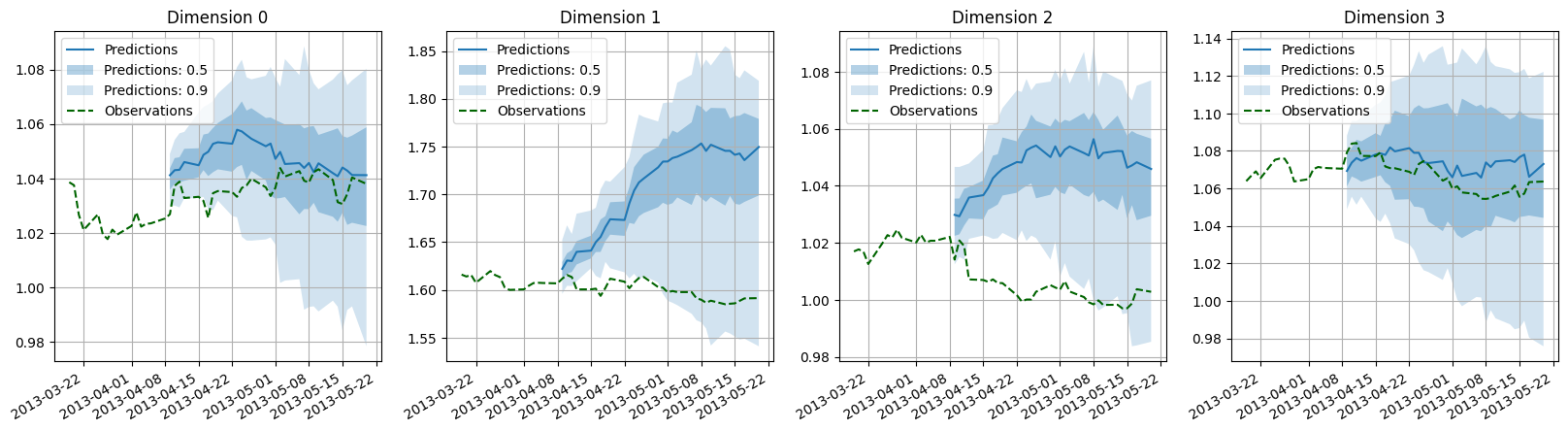}
  \caption{Sample forecasts of VSCLD in the exchange rate dataset} 
\end{figure}

\subsection{Samples for VSULD}

\begin{figure}[H]
  \includegraphics[width=\linewidth]{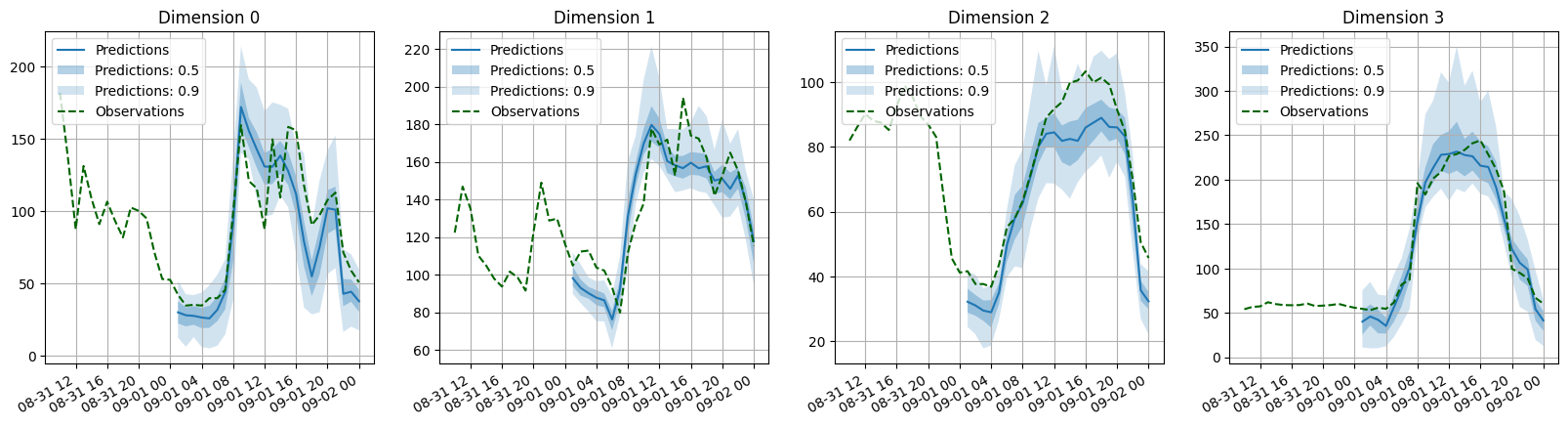}
  \caption{Sample forecasts of VSULD in the electricity dataset} 
\end{figure}

\begin{figure}[H]
  \includegraphics[width=\linewidth]{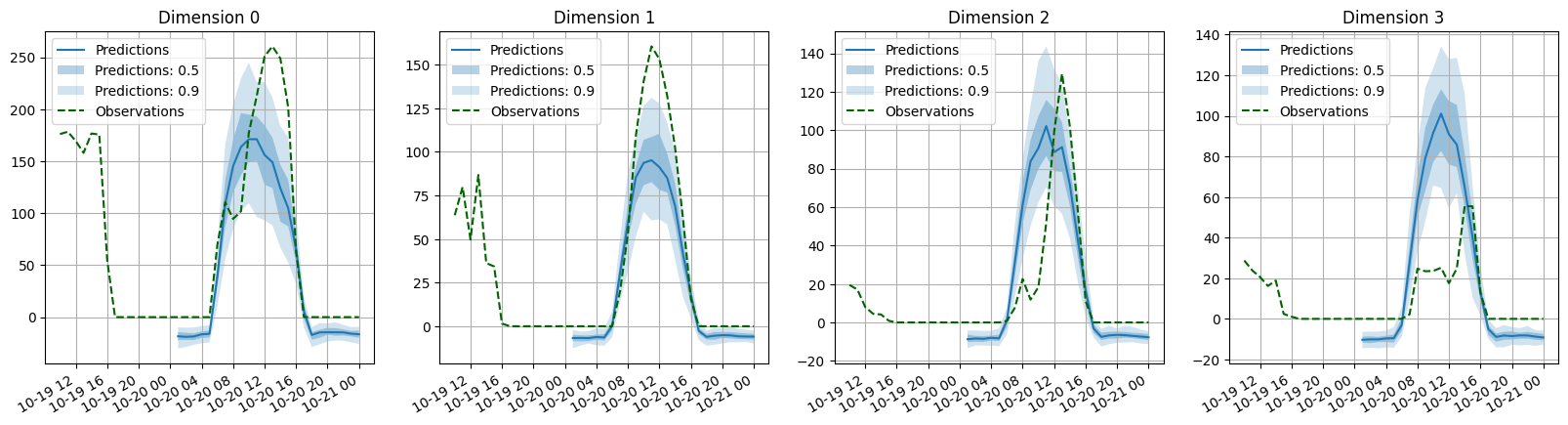}
  \caption{Sample forecasts of VSULD in the solar dataset} 
\end{figure}

\begin{figure}[H]
  \includegraphics[width=\linewidth]{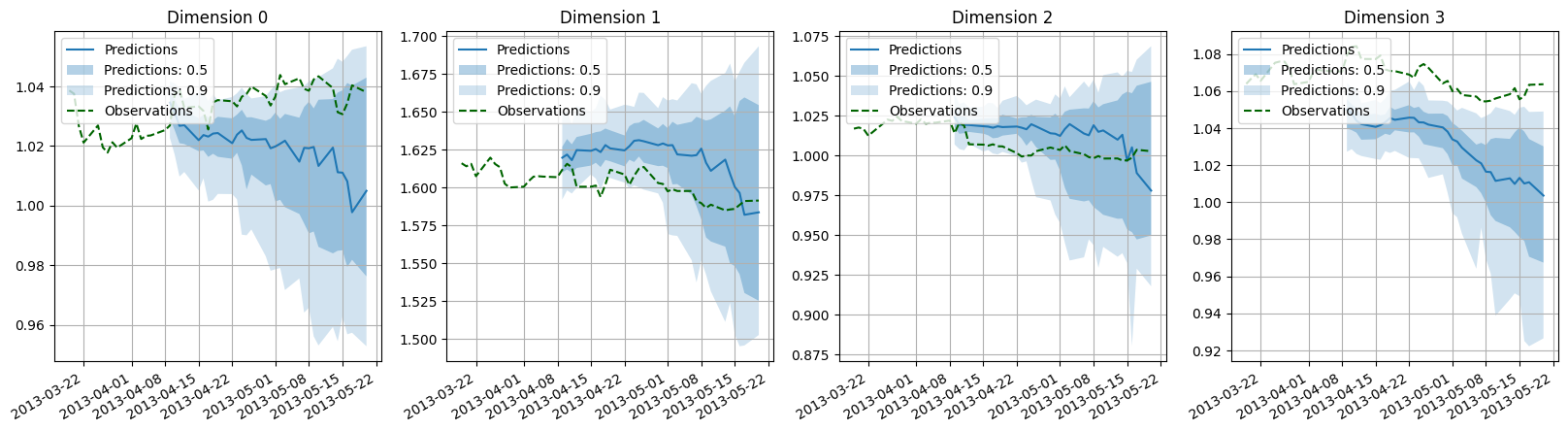}
  \caption{Sample forecasts of VSULD in the exchange rate dataset} 
\end{figure}

\subsection{Forecasts for VSDM}

\begin{figure}[H]
  \includegraphics[width=\linewidth]{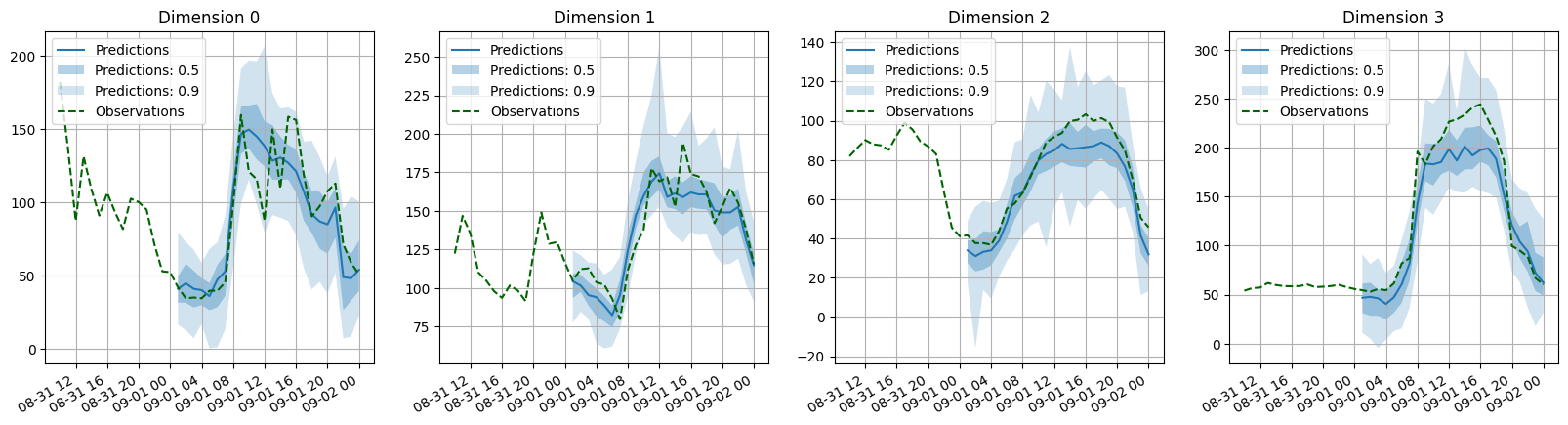}
  \caption{Sample forecasts of VSDM in the electricity dataset} 
\end{figure}

\begin{figure}[H]
  \includegraphics[width=\linewidth]{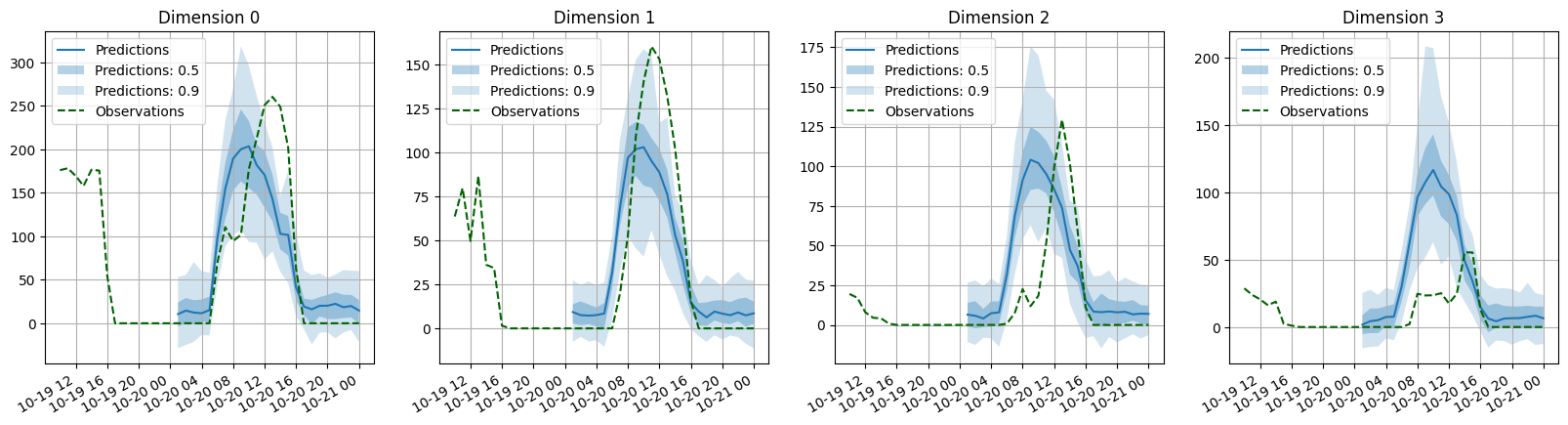}
  \caption{Sample forecasts of VSDM in the solar dataset} 
\end{figure}

\begin{figure}[H]
  \includegraphics[width=\linewidth]{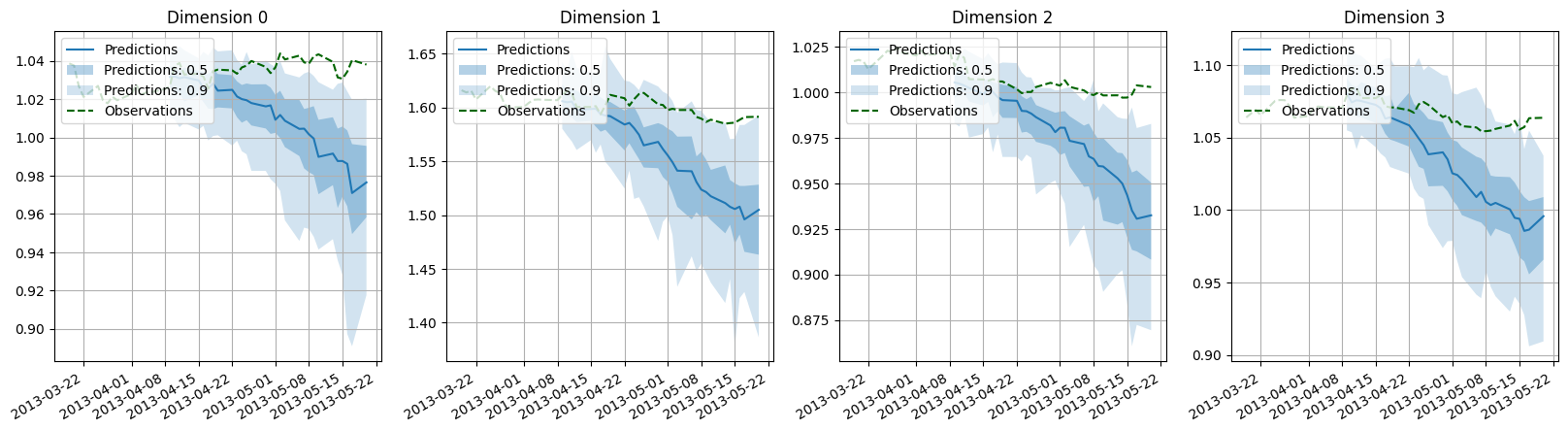}
  \caption{Sample forecasts of VSDM in the exchange rate dataset} 
\end{figure}

\subsection{Forecasts for CLD}

\begin{figure}[H]
  \includegraphics[width=\linewidth]{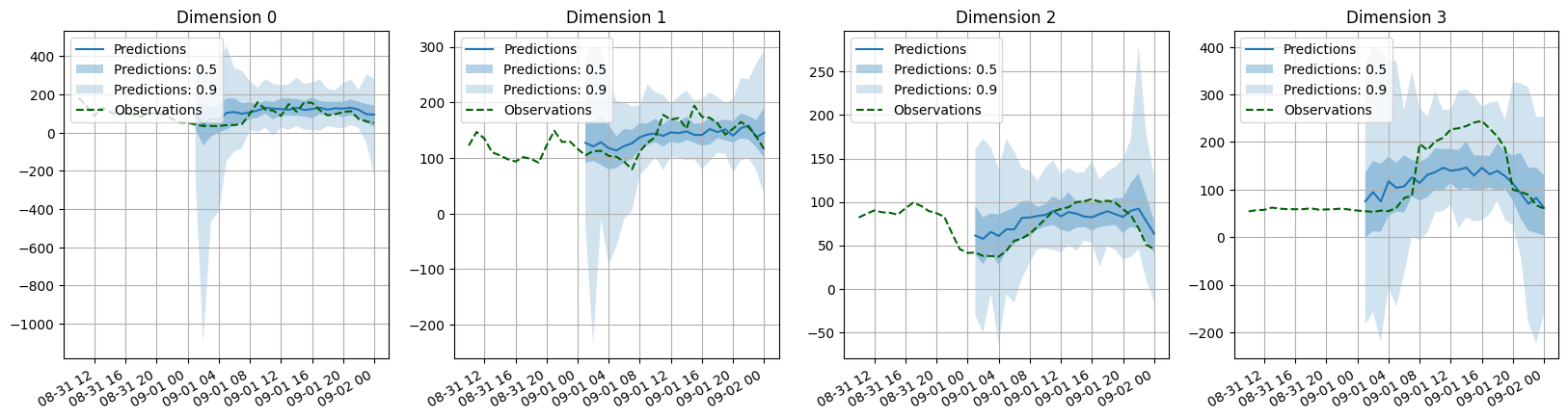}
  \caption{Sample forecasts of CLD in the electricity dataset} 
\end{figure}

\begin{figure}[H]
  \includegraphics[width=\linewidth]{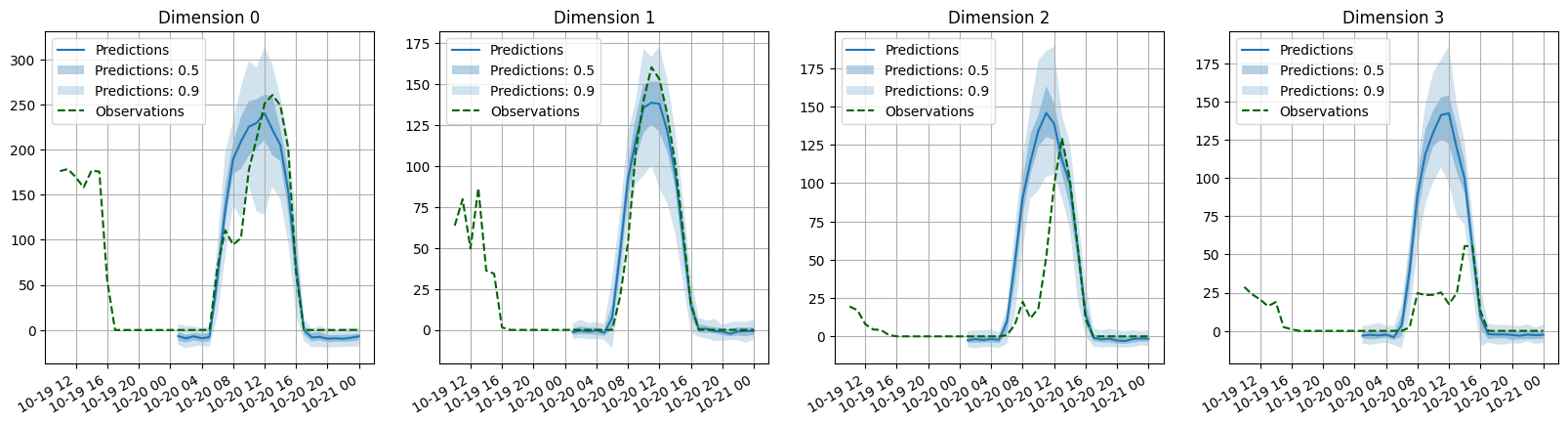}
  \caption{Sample forecasts of CLD in the solar dataset} 
\end{figure}

\begin{figure}[H]
  \includegraphics[width=\linewidth]{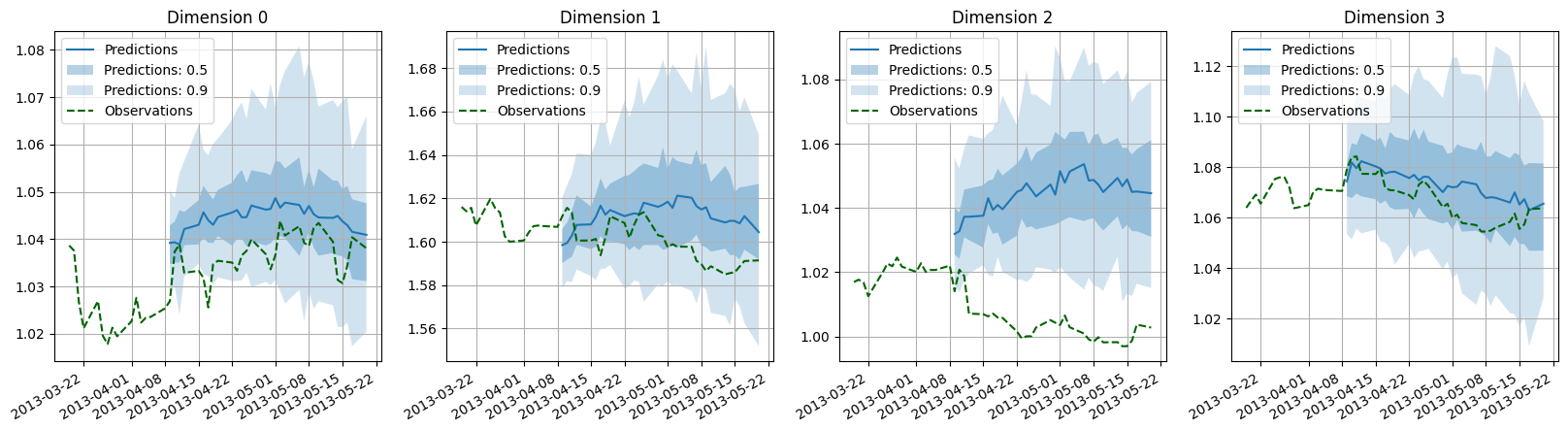}
  \caption{Sample forecasts of CLD in the exchange rate dataset} 
\end{figure}

\end{document}